\documentclass[conference]{IEEEtran}
\IEEEoverridecommandlockouts
\usepackage{amsmath,amssymb,amsfonts}

\usepackage{csquotes}
\usepackage{enumitem,kantlipsum}
\usepackage{soul}

\usepackage{hyperref}
\usepackage{cleveref}
\usepackage[normalem]{ulem}
\usepackage{amsthm}
\usepackage{thmtools}
\usepackage{thm-restate}
\usepackage[english]{babel}
\newtheorem{proposition}{Proposition}
\newtheorem{theorem}{Theorem}
\newtheorem{assumption}{Assumption}
\newcounter{tempcounter}
\newtheorem{lemma}[theorem]{Lemma}
\newtheorem{definition}{Definition}
\newtheorem{corollary}{Corollary}
\newtheorem*{remark}{Remark}

\allowdisplaybreaks
\usepackage{amsfonts}
\usepackage{amssymb}
\usepackage{xcolor}         
\usepackage{amsmath}

\usepackage{enumitem}

\usepackage[normalem]{ulem}
\usepackage[ruled, vlined, linesnumbered]{algorithm2e}
\usepackage{enumitem,kantlipsum}
\usepackage{graphicx}

\usepackage[style=ieee, backend=biber]{biblatex}

\addbibresource{bib.bib}

\usepackage{algorithmic}
\usepackage{graphicx}
\usepackage{textcomp}
\usepackage{xcolor}
\def\BibTeX{{\rm B\kern-.05em{\sc i\kern-.025em b}\kern-.08em
    T\kern-.1667em\lower.7ex\hbox{E}\kern-.125emX}}
\begin{document}

\title{Bounded (O(1)) Regret Recommendation Learning via Synthetic Controls Oracle
\thanks{Identify applicable funding agency here. If none, delete this.}
}

\author{\IEEEauthorblockN{Enoch Hyunwook Kang}
\IEEEauthorblockA{
\textit{University of Washington}\\
}
\and
\IEEEauthorblockN{
P. R. Kumar}
\IEEEauthorblockA{
\textit{Texas A\&M University}\\
}
}

\maketitle

\begin{abstract}

In online exploration systems where users with fixed preferences repeatedly arrive, it has recently been shown that $O(1)$, i.e., bounded regret, can be achieved when the system is modeled as a linear contextual bandit. This result may be of interest for recommender systems, where the popularity of their items is often short-lived, as the exploration itself may be completed quickly before potential long-run non-stationarities come into play. However, in practice, exact knowledge of the linear model is difficult to justify. Furthermore, potential existence of unobservable covariates, uneven user arrival rates, interpretation of the necessary rank condition, and users opting out of private data tracking all need to be addressed for practical recommender system applications. In this work, we conduct a theoretical study to address all these issues while still achieving bounded regret. Aside from proof techniques, the key differentiating assumption we make here is the presence of effective Synthetic Control Methods (SCM), which are shown to be a practical relaxation of the exact linear model knowledge assumption. We verify our theoretical bounded regret result using a minimal simulation experiment.

\end{abstract}

\begin{IEEEkeywords}
Recommender systems, Synthetic Controls, Bounded regret
\end{IEEEkeywords}

\section{Introduction}\label{sec:intro}
In many modern personalization systems (e.g., video/music recommendation systems), users with highly heterogeneous preferences arrive sequentially and repeatedly to choose an item. In this context, online learning models have been increasingly used for their ability to address potential presentation bias \cite{steck2021deep} which may occur through repeated data collection - learning feedback loop when using typical matrix methods \cite{fleder2009blockbuster, lee2019recommender}.

Among online learning models, contextual bandit models \cite{bendada2020carousel, tang2014ensemble, tang2015personalized, zhou2016latent, kong2020sublinear, bietti2021contextual, foster2020beyond} are often used when it is reasonable to make two assumptions on the reward model. The first assumption is that a user's observable covariates (e.g., demographics, age, sex, etc.), user's arm (=item) choice, and unknown model parameters jointly determine the stochastic reward model. (When these unknown parameters are not shared among arms, we call this model a ``disjoint" contextual bandit model.) The vector of observed covariates is called the context. The second assumption is that the distribution of the stochastic reward from an arm with a given context does not change over time. (Each user's context itself may change over time in the long run.) The objective of the contextual bandit model is often to minimize the order of cumulative regret (=total loss of welfare) as a function of time $T$ \cite{li2010contextual, dimakopoulou2017estimation, chu2011contextual, foster2020instance}, as many recommender systems prioritize user welfare.

In the special case when users arrive repeatedly and each user's context can be considered fixed over the short run (e.g., a music listener's genre preference does not change after a few songs), the best known regret bound is constant, i.e., O(1) \cite{hao2020adaptive}, for the class of models called linear contextual bandit models \cite{li2010contextual, chu2011contextual, dimakopoulou2017estimation}. In typical linear contextual bandit model settings, it is assumed that there is a known function $\phi_m$ that maps a context into a vector representation that is linearly related to the mean of the reward from arm $m$. It has been shown in \cite{hao2020adaptive, papini2021leveraging} that when user arrivals are modeled as random sampling from a fixed distribution, boundedness of regret can be attained if and only if the representation vectors jointly satisfy a full rank condition (details in Section \ref{sec:haopapinireview}).

The boundedness of regret for linear contextual bandit models is an attractive result for many recommender system applications. Popularities of video topics on video platforms are only ephemeral; topics may lose their timeliness before they get old. However, this violates the second assumption of contextual bandits. Addressing such potential non-stationarities, either by explicitly modeling non-stationarities or by addressing worst-case scenarios, has been an active area of study in the bandit literature \cite{besbes2014stochastic, russac2019weighted, russac2020algorithms, zhao2020simple, luo2018efficient, chen2019new,
jia2023smooth, qin2022adaptivity, fiez2022adaptive}. The bounded regret approach attempts to address non-stationarities by completing exploration quickly enough before long-run non-stationarities kick in.

Despite the attractiveness of the $O(1)$ regret result in contextual bandits, it is not clear how the conditions described above can be justified for practical recommender systems. Specifically, there are five issues that need to be addressed:
\begin{enumerate}[wide, labelindent=0pt]
\item [\textit{Issue (1):}] Due to the potential existence of unobservable covariates \cite{rosenbaum1983assessing},  known context information (along with arm choice) may not fully determine the stochastic reward. This violates the contextual bandit model assumption.
\item [\textit{Issue (2):}] The existence of a linear representation, a well-justified assumption \cite{papini2021leveraging}, does not justify the assumption that the linear representation function is exactly known. This violates the linear contextual bandit model assumption.
\item [\textit{Issue (3)}:] User arrivals may be far from i.i.d. sampling; they may even be of different orders, such as $\ln T$ and $T$.
\item [\textit{Issue (4)}:] \textit{Disjoint} linear contextual bandit models (where unknown parameters are not common to arms) are widely used \cite{li2010contextual, dimakopoulou2017estimation} for recommender systems. However, the condition required for the disjoint case to achieve bounded regret \cite{wu2020stochastic, papini2021leveraging} is not an easily operationalizable condition (details in Section \ref{sec:haopapinireview})
\item [\textit{Issue (5):}] A user's context and the rewards she observes may remain private information if she opts out of tracking.
\end{enumerate}

In this paper, we focus on the theoretical study of addressing these issues either by relaxing or by justifying the assumptions to show that we can consider using bounded O(1) regret methods for recommender systems applications.

\begin{enumerate}[wide, labelindent=0pt]
\item [-] For issues (1) and (2), we assume the existence of Synthetic Control method (SCM) \cite{abadie2010synthetic, abadie2011bias, doudchenko2016balancing, amjad2018robust, abadie2021using, ben2021augmented, ferman2021properties} we can use, which is ``arguably the most important innovation in the policy evaluation literature in the last 15 years'' \cite{athey2017state}. We observe that what is achieved in Synthetic Control methods is exactly equivalent to a relaxation of the assumption that the linear representation function is precisely known\footnote{Most Synthetic Control methods address linear factor model settings, which are non-stationary generalizations of disjoint linear contextual bandit model settings \cite{dimakopoulou2017estimation}. While our setting considers sequential arrivals, the stationarity of the setting allows the application of synthetic control methods.}; this resolves issue (2). As Synthetic Control methods can address unobserved covariates in the long run \cite{abadie2010synthetic}, issue (1) is also resolved.
\item [-] For issue (3), our condition only requires some of the users to have similar order of arrival rates.
\item [-] For issue (4), we provide an operationalizable condition that requires the user set size to be larger than $|M|\ln |M|$ (this value may be larger under non-uniform preferences among users over arms).
\item [-] For issue (5), we show users' strong incentive to opt in and comply with the recommendations.
\end{enumerate}
The rest of the paper is organized as follows. 
We provide the relevant background on bounded regret results and SCM in Section \ref{sec:preliminaries}. Then we present our main model, the main algorithm we call Counterfactual-UCB (CFUCB), and its bounded regret analysis in Sections \ref{sec:mainmodel}, \ref{sec:CFUCB}, and \ref{sec:AnalysisCFUCB}. Finally, we further validate the proposed theory via a minimalistic simulation experiment in Section \ref{sec:simulation}. After reaching the conclusion, Section \ref{sec:related} briefly discusses related previous works.



\section{Preliminaries}\label{sec:preliminaries}

\subsection{Bounded regret results for (disjoint) linear contextual bandit models}\label{sec:haopapinireview}
In this section, we review the problem settings and conditions for which bounded regret can be attained for non-disjoint \cite{hao2020adaptive} and disjoint \cite{wu2020stochastic} linear contextual bandit models. In Section \ref{sec:mainmodel}, we discuss how they can be relaxed or justified for practical recommender systems. 

Let $A$ be the set of users and $M$ the set of arms. Each user 
$j \in A$ is associated with a context vector $\mathbf{x}_j \in \mathbb{R}^k$, where $k$ is the number of observable covariates for each user. 

\noindent {\textbf{Rewards model}}  \space  In the setting of \cite{hao2020adaptive}, every time user $j$ arrives and pulls arm $m$ the user receives a reward $\phi_m(\mathbf{x}_j)'\beta+\epsilon$, where  $\{\phi_m: \mathbb{R}^k \mapsto \mathbb{R}^d\}_{m\in M}$ are linear representation functions that are assumed to be \textit{precisely known}, $\beta$ is a common parameter vector of dimension $d$ that is shared across the arms, and $\epsilon$ is a i.i.d. zero-mean random noise that follows a sub-Gaussian distribution with variance proxy $\sigma^2$. In the setting of disjoint linear contextual bandits \cite{wu2020stochastic}, the reward equation is $\phi_m(\mathbf{x}_j)'\beta_m+\epsilon$ where $\beta_m$ is an
arm-specific parameter vector for arm $m$. 

\noindent {\textbf{User arrivals}}  \space In the settings of \cite{hao2020adaptive, wu2020stochastic},  user arrivals are modeled as a result of repeated i.i.d. random sampling according to a fixed distribution over $A$. Note that this user arrival model is equivalent to independent repeated user arrivals with exponential inter-arrival times \cite{grimmett2020probability}.

\noindent {\textbf{Condition for bounded regret}}  \space It is shown in  \cite{hao2020adaptive, papini2021leveraging}  that bounded regret can be achieved in this setting if and only if $\left\{\phi_{m_{j*}}(x_{j}) \mid j \in A \right\}$ spans $\mathbb{R}^d$, where $m_{j*} $ is the optimal arm for user $j$, i.e., $m_{j*}=\operatorname{argmax}_{m \in M}\phi_m(\mathbf{x}_j)'\theta$. In the disjoint case, bounded regret can be achieved if and only if $\left\{\phi_{m_{j*}}(\mathbf{x}_{j}) \mid j \in A_m \right\}$ spans $\mathbb{R}^d$ for each $m \in M$, where $A_m$ is the set of users whose optimal arm is $m$, i.e., $A_m=\{j\in A: m_{j*} = m\}$ \cite{wu2020stochastic, papini2021leveraging}. Since $k$ generic randomly generated vectors in $\mathbb{R}^d$ with $d \geq k$ are almost surely linearly independent, this condition can simply be rewritten as $|A_m|\ge d$ for $m\in M$. 

\subsection{Synthetic Control Methods (SCM)}\label{sec:SCMreview}
Synthetic Control Methods (SCM) have been one of the most actively studied areas of econometrics \cite{athey2017state, abadie2021using}. They can be described as an observational method of finding a linear combination to synthetically construct a user $j\in A$ from other users in $E\subset A\setminus \{j\}$ using their contexts and previous data.  While the coefficients of the linear combination were  constrained to be non-negative and sum to one in the vanilla SCM \cite{abadie2010synthetic, abadie2011bias}, recent advances in SCM effectively relax these constraints \cite{doudchenko2016balancing, amjad2018robust, abadie2021using, ben2021augmented, ferman2021properties}. Throughout, we will consider this more relaxed version of SCMs. 
\begin{definition}[Synthetic Control Method (SCM)]\label{def:SCM} Suppose that we are given context vectors $\{\mathbf{x}_i\}_{i\in A}$ and previous reward histories $\{\mathbf{h}_i\}_{i\in A}$. Denote the one and only arm as arm $1$. A Synthetic Control Method (SCM) is a method that, for given large enough $E\subseteq A$ and user $j\notin E$, takes $\{\mathbf{x}_i\}_{i\in E\cup\{j\}}$ and $\{\mathbf{h}_i\}_{i\in E\cup\{j\}}$ as inputs and outputs $\{a_{ji}\}_{i\in E}$ that satisfies $\mu_{j1}=\sum_{i\in E} a_{ji} \mu_{i1}$, where $\mu_{j1}$ describes user $j$'s mean reward from arm $1$.
\end{definition}

\begin{lemma}[Abadie et al., 2010 \cite{abadie2010synthetic}]\label{lemma:SCMkeyresult} Given long enough $\{\mathbf{h}_i\}_{i\in E\cup\{j\}}$, SCM can infer the linear combination coefficients $\{a_{ji}\}_{i\in E}$ described in Definition \ref{def:SCM} as precise as we want, even in the presence of the unobservable covariates\footnote{SCM typically considers the reward model $Y_{j}(k)
=\phi(\mathbf{x}_j)'\beta(t)+\psi(\mathbf{y}_j)'\lambda(t)+\epsilon(k)=\mu_{j}+\epsilon(k)$, which is called the linear factor model. Abadie et al. 2010 \cite{abadie2010synthetic} shows Lemma \ref{lemma:SCMkeyresult} for the linear factor model.}. 
\end{lemma}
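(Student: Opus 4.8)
The plan is to recast the argument of Abadie et al.\ (2010, Appendix~B) in the notation of Definition~\ref{def:SCM}. First I would write the linear factor (``interactive fixed effects'') model explicitly: for user $i$ and history index $k\le T_0$, the observed reward is $Y_i(k)=\phi(\mathbf{x}_i)'\beta(k)+\psi(\mathbf{y}_i)'\lambda(k)+\epsilon_i(k)$, where $\mathbf{x}_i$ is the observed context, $\mathbf{y}_i$ collects the \emph{unobserved} covariates, the loadings $\beta(\cdot),\lambda(\cdot)$ are common across users, and $\epsilon_i(k)$ is mean-zero sub-Gaussian noise, independent across $i$ and $k$. The key structural observation is that the mean reward $\mu_{i1}$ is the \emph{same} linear functional of the pair $(\phi(\mathbf{x}_i),\psi(\mathbf{y}_i))$ for every user; hence any weights $\{a_{ji}\}_{i\in E}$ that reproduce \emph{both} $\phi(\mathbf{x}_j)=\sum_{i\in E}a_{ji}\phi(\mathbf{x}_i)$ and $\psi(\mathbf{y}_j)=\sum_{i\in E}a_{ji}\psi(\mathbf{y}_i)$ automatically satisfy $\mu_{j1}=\sum_{i\in E}a_{ji}\mu_{i1}$, which is the conclusion we want.

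Second, because $\psi(\mathbf{y}_i)$ is unobservable, SCM instead chooses $\{a_{ji}\}$ to match the \emph{observed} histories $\mathbf{h}_i$, i.e.\ to drive $\sum_{k\le T_0}\big(Y_j(k)-\sum_{i\in E}a_{ji}Y_i(k)\big)^2$ to (near) zero subject to matching $\phi(\mathbf{x}_j)$ and keeping $\|a_{j\cdot}\|_1$ bounded. The heart of the proof is then to bound the factor-loading discrepancy $\Delta:=\psi(\mathbf{y}_j)-\sum_{i\in E}a_{ji}\psi(\mathbf{y}_i)$. Substituting the factor model into the history-matching identity shows that $\Lambda_{T_0}\Delta$ equals a weighted combination of the noise terms $\epsilon_i(k)$, where $\Lambda_{T_0}$ is the $T_0\times\dim(\psi)$ matrix stacking the factor realizations $\lambda(k)$. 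Under the standard regularity condition that $\tfrac1{T_0}\Lambda_{T_0}'\Lambda_{T_0}$ is eventually bounded below in the positive-definite order (the common factors genuinely vary over time), inverting this Gram matrix and applying sub-Gaussian tail bounds to the averaged noise $\tfrac1{T_0}\sum_{k\le T_0}\lambda(k)\,\epsilon_i(k)$, uniformly over $i\in E$, yields $\|\Delta\|$ on the order of $\sqrt{\ln|E|/T_0}$ with high probability. Therefore $\mu_{j1}-\sum_{i\in E}a_{ji}\mu_{i1}=\lambda_1'\Delta\to 0$ as the history length $T_0$ grows, i.e.\ the coefficients can be recovered as precisely as desired; the bias bound is unaffected by the relaxed (non-simplex) SCM variants as long as the estimator keeps $\|a_{j\cdot}\|_1$ bounded, which the penalized and augmented versions do by construction.

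The step I expect to be the main obstacle is this second one: making the bias bound fully rigorous needs the rank/``exploration'' condition on $\Lambda_{T_0}$, a uniform-over-$E$ control of the noise averages, and, crucially, an argument that a feasible (approximately history-matching) weight vector exists at all. In the vanilla SCM this feasibility is assumed by requiring the target to lie in the convex hull of the donors; in the relaxed setting it follows from the ``given large enough $E$'' hypothesis of Definition~\ref{def:SCM}, so I would state that hypothesis as the explicit sufficient condition. Since the quantitative content of the lemma is exactly Abadie et al.'s Appendix~B bias bound, in the paper I would present only the reduction above — showing that their conclusion $\sum_{i\in E}a_{ji}\mu_{i1}\to\mu_{j1}$ is precisely the statement of Lemma~\ref{lemma:SCMkeyresult} — and cite their Appendix~B for the remaining estimates.
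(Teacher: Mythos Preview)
The paper does not prove Lemma~\ref{lemma:SCMkeyresult} at all: it is stated as a result of Abadie et~al.\ (2010) and used as a black box (the footnote even points to the linear factor model treated there). So there is no ``paper's own proof'' to compare against; your proposal is effectively a sketch of the external reference the paper cites. Your final paragraph already anticipates exactly what the paper does---state the reduction and defer to Abadie et~al.'s Appendix~B---so your plan and the paper's treatment coincide.
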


\section{The main model}\label{sec:mainmodel}  In this section, we introduce the main model considered in this paper.
We denote the set of users by $A$ and the set of arms by $M$. We further denote by $A_+$ the subset of users who opt in for the revelation of their private data.
That is, the recommender knows $\mathbf{x}_j \in \mathbb{R}^k$ of user $j \in A_+$ and the rewards user $j\in A_+$ receives.

\subsection{Rewards model and objective}\label{sec:ourmodel}

Let $Y_{j,m}(k)$ denote the reward obtained from user $j$'s $k$th pull of arm $m$. We first consider the reward model $Y_{j,m}(k)
=\phi_m(\mathbf{x}_j)'\beta_m+\psi_m(\mathbf{y}_j)'\lambda_m+\epsilon(k)=\mu_{j,m}+\epsilon(k)$, the multi-arm extension of the static version of the model$^2$ usually considered by SCM methods. 

For each user $j\in A$, define $m_j^* \in M$ as an optimal arm that satisfies $\mu_{j, m^*_j}\geq \mu_{j,m} \; \; \forall m \in M$,
and $\Delta_{j,m}:=\mu_{j, m^*_j}-\mu_{j,m}$ as the instantaneous pseudo-regret of using arm $m$. Denote the arm pulled by user $j$ at its $n$-th arrival by $m_j(n)$. Let $N_j(t)$ be the random variable indicating the total number of arrivals of
user $j$ until time $t$. Then the finite time pseudo-regret of user $j$ until time $T$ is $Regret_{j}(T):=\sum_{n=1}^{N_j(T)} \Delta_{j, m_j(n)}=\sum_{n=1}^{N_j(T)}(\mu_{j, m^*_{j}}-\mu_{j, m_j(n)})$, which we will simply abbreviate as ``regret." The system's total regret is $\sum_{j\in A} Regret_{j}(T)$.

Let $\mathbf{h}_{j,m}$ denote the previous history of rewards for user $j\in A_+$ from the arm $m\in M$. Definition \ref{def:SCM} and Lemma \ref{lemma:SCMkeyresult} in Section \ref{sec:SCMreview} allows us to make the following assumption: 

\begin{assumption}[Synthetic Control Oracle (SCO)]\label{assumption: SCO} Fix an arm $m\in M$. For any $E\subseteq A_+$ and $j\in E^c\cap A_+$ that satisfies $rank(\{\phi_m(\mathbf{x}_i)\})_{i\in E}\geq dim(\phi_m(\mathbf{x}_j))$, there is a Synthetic Control Oracle (SCO) that takes $(\{\mathbf{x_i}\}_{i\in E\cup\{j\}},\{\mathbf{h}_{i,m}\}_{i\in E\cup\{j\}})$ as its input and outputs $\{a_{ji}^{(m)}\}_{i\in E}$ that satisfies $\mu_{jm}=\sum_{i\in E} a_{ji}^{(m)} \mu_{im}$, regardless of the unobservable covariates $\{\mathbf{y}_i\}_{i\in E\cup\{j\}}$.
\end{assumption}


The following lemma shows that the SCO assumption is a relaxation of the requirement for knowledge of a precise linear model knowledge in typical linear contextual bandit models:

\begin{lemma}\label{lemma:SCOisrelaxation}  Assumption \ref{assumption: SCO} is a relaxation of the assumptions that (i) there are no unobserved covariates, and (ii) that the linear model is known to the recommender.
\end{lemma}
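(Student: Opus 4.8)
The plan is to establish that assumptions (i) and (ii) together \emph{imply} the existence of a Synthetic Control Oracle, so that positing an SCO is a weaker requirement, and then to note that the converse fails, making it a strict relaxation. So the core of the argument is the implication (i)$\wedge$(ii)$\Rightarrow$Assumption \ref{assumption: SCO}, which I would prove by an explicit construction of the oracle.

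First I would invoke (i): with no unobserved covariates the factor term $\psi_m(\mathbf{y}_j)'\lambda_m$ is either absent or common across users, hence absorbable into $\phi_m(\mathbf{x}_j)'\beta_m$, so the mean reward collapses to $\mu_{jm}=\phi_m(\mathbf{x}_j)'\beta_m$ for every user $j$ and arm $m$. Next I would use the rank hypothesis that appears in Assumption \ref{assumption: SCO}: $\mathrm{rank}(\{\phi_m(\mathbf{x}_i)\}_{i\in E})\ge \dim(\phi_m(\mathbf{x}_j))=:d$ says exactly that $\{\phi_m(\mathbf{x}_i)\}_{i\in E}$ spans $\mathbb{R}^d$, so $\phi_m(\mathbf{x}_j)$ lies in their span and there exist coefficients $\{a_{ji}^{(m)}\}_{i\in E}$ with $\phi_m(\mathbf{x}_j)=\sum_{i\in E}a_{ji}^{(m)}\phi_m(\mathbf{x}_i)$. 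By (ii) the recommender knows the map $\phi_m$, and since $E\cup\{j\}\subseteq A_+$ it knows each $\mathbf{x}_i$, so it can form the $d\times|E|$ matrix with columns $\phi_m(\mathbf{x}_i)$ and recover such an $\{a_{ji}^{(m)}\}$ explicitly (e.g.\ via the Moore--Penrose pseudoinverse of that matrix). Taking inner products of the identity $\phi_m(\mathbf{x}_j)=\sum_{i\in E}a_{ji}^{(m)}\phi_m(\mathbf{x}_i)$ against $\beta_m$ yields $\mu_{jm}=\sum_{i\in E}a_{ji}^{(m)}\mu_{im}$, which is precisely the output guarantee required of an SCO; moreover this holds for every value of $\beta_m$, so the construction needs no knowledge of $\beta_m$ and no reward histories. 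This defines the oracle and proves the implication.

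Finally I would argue strictness, to justify the word ``relaxation'': Lemma \ref{lemma:SCMkeyresult} exhibits an SCO that is valid even in the presence of unobserved covariates, so (i) is not necessary; and an SCO built from the reward histories $\{\mathbf{h}_{i,m}\}$ as in Definition \ref{def:SCM} does not presuppose that $\phi_m$ is known, so (ii) is not necessary either. Hence Assumption \ref{assumption: SCO} is genuinely weaker than (i)$\wedge$(ii). The only subtle point in the whole argument is the bookkeeping under (i): one must fix the interpretation that ``no unobserved covariates'' makes $\psi_m(\mathbf{y}_j)'\lambda_m$ either zero or user-independent, so that it can be folded into $\phi_m(\mathbf{x}_j)'\beta_m$ without disturbing the linear-in-$\mathbf{x}$ structure; once that is pinned down, the remainder is elementary linear algebra.
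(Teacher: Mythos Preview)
Your proposal is correct and follows essentially the same approach as the paper: use the rank hypothesis to write $\phi_m(\mathbf{x}_j)$ as a linear combination of $\{\phi_m(\mathbf{x}_i)\}_{i\in E}$, take the inner product with $\beta_m$ to obtain $\mu_{jm}=\sum_{i\in E}a_{ji}^{(m)}\mu_{im}$, and invoke Lemma~\ref{lemma:SCMkeyresult} to handle unobserved covariates. Your version is somewhat more careful in its logical packaging (proving (i)$\wedge$(ii)$\Rightarrow$SCO and then arguing strictness separately) and in spelling out how the recommender actually computes the coefficients, but the mathematical content is the same.
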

\begin{proof}[Proof of Lemma \ref{lemma:SCOisrelaxation}] From Lemma \ref{lemma:SCMkeyresult}, it is immediate that Assumption \ref{assumption: SCO} is a relaxation of the $(i)$ part assumption that there are no unobserved covariates. For the $(ii)$ part, for any $E\subseteq A$ with  $rank(\{\phi_m(\mathbf{x}_j)\})_{j\in E}\geq dim(\phi_m(\mathbf{x}_j))$ and $j\notin E$, we can find $\{a_{ji}^{(m)}\}_{i\in E}$ such that $\phi_m(\mathbf{x}_{j}) =\sum_{i\in E} a_{ji}^{(m)} \phi_m(\mathbf{x}_{i})$. This implies that $\mu_{j,m} = \phi_m(\mathbf{x}_{j})'\beta_m =\sum_{i\in E} a_{ji}^{(m)} \phi_m(\mathbf{x}_{i})'\beta_m=\sum_{i\in E} a_{ji}^{(m)}\mu_{i,m}$.
\end{proof}

In many practical recommender systems, it is not necessary to consider a separate user context representation function $\phi_m$ for each arm $m\in M$. For example, in movie recommendation problem, user context representation may represent a user's affinity for different genres, how much the user values plot complexity, and the user's preference for movies with a specific mood (e.g., light-hearted, serious, or thought-provoking), all of which are characteristics not specific to a particular movie. This leads us to consider a simplified model  $Y_{i,m}(k)
=\phi(\mathbf{x}_j)'\beta_m+\psi(\mathbf{y}_j)'\lambda_m+\epsilon(k)=\mu_{j,m}+\epsilon(k)$. Below is the resulting simplified form of Assumption \ref{assumption: SCO}.

\setcounter{tempcounter}{\value{assumption}}

\setcounter{assumption}{\numexpr\value{tempcounter}-1\relax}

\begin{assumption}[Synthetic Control Oracle (SCO) in\textit{ simplified }reward model]
For any $E\subseteq A_+$ and $j\in E^c\cap A_+$ that satisfies $rank(\{\phi(\mathbf{x}_i)\})_{i\in E}\geq dim(\phi(\mathbf{x}_j))$, there is a Synthetic Control Oracle (SCO) that takes $(\{\mathbf{x_i}\}_{i\in E\cup\{j\}},\{\mathbf{h}_{i,m}\}_{i\in E\cup\{j\}})$ as its input and outputs $\{a_{ji}\}_{i\in E}$ that satisfies $\mu_{jm}=\sum_{i\in E} a_{ji} \mu_{im}$ for any $m$, regardless of the unobservable covariates $\{\mathbf{y}_i\}_{i\in E\cup\{j\}}$.

\end{assumption}

The simplified SCO is useful in practice as it can generalize experiences among arms.
In Spotify, for example, there are more than 60,000 songs (=arms) newly registered each day \cite{Ingham2021}; given this simplified disjoint version Assumption \ref{assumption: SCO}, the SCO in Spotify's case can take previous user experiences from existing songs as its input and output the linear combination coefficients that can be used for the future exploration of newly registered songs.

\subsection{User arrivals }\label{sec:arrivalModel} 
We generalize the arrival model in Section \ref{sec:haopapinireview} to allow for users with arrival rates of different orders.
Let $S_j(n)$ be the random variable indicating user $j$'s $n$th arrival time, and $F^{(n)}_j(t):=P(\{S_j(n)\le t\})$. For $i, j\in A$, define $q_{ij,m}(x):=-\frac{B}{A}  \mathcal{W}_{-1}\left( -\frac{A}{B} (\frac{x}{d})^{-\frac{C}{B}} \right)$, where $\mathcal{W}_{-1}$ is the lower branch Lambert W-function \cite{corless1996lambert}, where $A=1$, $B=\sum_{n\neq m}\frac{16 }{{\Delta_{i,n}}  ^2}  $ and $C=\frac{16 c^2_{m,t}}{{\Delta_{j,m}}  ^2}$. (Remark: $q_{ij,m}(x)$ increases faster than $\ln x$ but slower than $x$ - see Appendix \ref{sec:qij}).

\begin{assumption}\label{ass: Arrivalcondition} Consider users $E_{j,m} := \{i \in A\setminus \{j\}: \lim\sup_{n \to \infty} \frac{\int_0^{+\infty}P(N_i(t)< q_{ij,m}(N_j(t)) dF_j^{(n)}(t)}{\frac{1}{n^2}} < + \infty \}.$
We assume that $|E_{j,m}\cap A_+ \cap A_m|\ge d$ for all $j\in A_+$ and $m\in M$. 
\end{assumption}

Intuitively, $E_{j,m}$ refers to users in $A$ whose orders of arrival rates are not far behind the arrival rate of $j$. Assumption \ref{ass: Arrivalcondition} says that, for each opted in user, there are enough other opted in users whose tastes are different from hers but have similar (or faster) arrival rate orders. It generalizes the user arrival model of \cite{hao2020adaptive} discussed in Section \ref{sec:haopapinireview}, i.e., i.i.d. random sampling according to a fixed distribution over $A$, where arrival rate orders are the same for all users in $A$: 

\begin{lemma}[Exponential inter-arrival times (equivalent to i.i.d. sampled arrivals)]\label{lemma:exponentialarrivals}
Suppose that each user $i \in A$ repeatedly arrives independently with $i.i.d.$ exponentially distributed inter-arrival times with parameter $\lambda_i$. Then $E_{j,m} = A$, and Assumption \ref{ass: Arrivalcondition} becomes $|A_+ \cap A_m|\ge d$ for $m\in M$, a reminiscent of condition  $|A_m|\ge d$ for $m\in M$ in Section \ref{sec:haopapinireview}. 
\end{lemma}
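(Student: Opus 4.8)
The plan is to reduce the claim to one concentration estimate. Since $E_{j,m}$ is defined as a subset of $A\setminus\{j\}$, establishing $E_{j,m}=A$ means showing that \emph{every} admissible index $i\in A\setminus\{j\}$ satisfies the defining $\limsup$ condition; once that is done, the requirement $|E_{j,m}\cap A_+\cap A_m|\ge d$ in Assumption \ref{ass: Arrivalcondition} collapses to $|A_+\cap A_m|\ge d$ for all $m\in M$, and the comparison with the condition $|A_m|\ge d$ recalled in Section \ref{sec:haopapinireview} is then immediate. So I fix $j\in A$, $m\in M$, and an arbitrary $i\in A\setminus\{j\}$.

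Next I would unwind the objects under the exponential-arrivals hypothesis: $N_i(\cdot)$ is a rate-$\lambda_i$ Poisson process, $S_j(n)$ is a sum of $n$ i.i.d.\ $\mathrm{Exp}(\lambda_j)$ inter-arrival times (hence $\mathrm{Gamma}(n,\lambda_j)$) with CDF $F_j^{(n)}$, and $N_i(\cdot)$ and $S_j(n)$ are independent. Reading the inner probability at the epoch of $j$'s $n$-th arrival, where $N_j=n$ by definition, the numerator in the $\limsup$ becomes $\int_0^{\infty}P(N_i(t)<q_{ij,m}(n))\,dF_j^{(n)}(t)=P\bigl(N_i(S_j(n))<q_{ij,m}(n)\bigr)$, so it suffices to bound $n^{2}\,P\bigl(N_i(S_j(n))<q_{ij,m}(n)\bigr)$ as $n\to\infty$.

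The estimate I would carry out by splitting around the mean of $S_j(n)$. Put $t_n:=n/(2\lambda_j)$. First, a Chernoff bound for the Gamma lower tail gives $P(S_j(n)<t_n)\le e^{-c_1 n}$ for some constant $c_1>0$. Second, on $\{S_j(n)\ge t_n\}$ monotonicity of the counting process gives $N_i(S_j(n))\ge N_i(t_n)$, so that event is contained in $\{N_i(t_n)<q_{ij,m}(n)\}$; since $t_n$ is deterministic, $N_i(t_n)$ is Poisson with mean $(\lambda_i/(2\lambda_j))\,n$, linear in $n$, whereas $q_{ij,m}(n)=o(n)$ by the Appendix \ref{sec:qij} remark, so for all large $n$ the Poisson lower-tail Chernoff bound yields $P(N_i(t_n)<q_{ij,m}(n))\le e^{-c_2 n}$ for some $c_2>0$. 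Adding the two contributions, $P\bigl(N_i(S_j(n))<q_{ij,m}(n)\bigr)\le e^{-c_1 n}+e^{-c_2 n}=o(n^{-2})$, so the $\limsup$ is $0<\infty$ and $i\in E_{j,m}$; as $i$ was arbitrary, $E_{j,m}=A$, and Assumption \ref{ass: Arrivalcondition} reads $|A_+\cap A_m|\ge d$ for $m\in M$.

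I expect the main obstacle to be bookkeeping rather than analysis: one must commit to the reading that $N_j$ equals $n$ at $j$'s $n$-th arrival and that the random time argument $S_j(n)$ is independent of the process $N_i(\cdot)$, so that the pathwise-monotonicity inclusion and the deterministic-time Poisson bound are both legitimate (a more literal reading, keeping $N_j(t)$ random, needs an extra dominated-convergence/monotonicity step but gives the same conclusion). The two large-deviation inequalities (Gamma lower tail, Poisson lower tail) are standard, and the only property of $q_{ij,m}$ that enters is its sub-linear growth, which is exactly what Appendix \ref{sec:qij} supplies.
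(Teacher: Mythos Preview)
The paper does not actually give a proof of Lemma~\ref{lemma:exponentialarrivals}; the only related argument in Appendix~\ref{proofs} is the proof of the sub-Gaussian case (Lemma~\ref{lemma:subgaussianarrivals} together with the auxiliary Lemma~\ref{lemma:middle1}), which you can regard as the intended template. Your argument is correct and fills this gap.

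Comparing to that template: the paper's sub-Gaussian proof follows the same two-stage pattern you use---split the integral at a threshold proportional to $n$ and apply a concentration bound on each piece---but it keeps $N_j(t)$ random throughout (handled in Lemma~\ref{lemma:middle1} by a further split over the values of $N_j(t)$) rather than replacing $N_j$ by $n$ at the $n$-th arrival. You already flag this interpretive shortcut and correctly note that the literal reading merely adds one extra Poisson upper-tail bound on $N_j(t)$ without changing the conclusion. Your Gamma lower-tail and Poisson lower-tail Chernoff bounds are the natural exponential-arrivals analogues of the sub-Gaussian Hoeffding steps in the paper, and the only property of $q_{ij,m}$ you invoke---sublinear growth---is precisely what Appendix~\ref{sec:qij} provides. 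The reduction from $E_{j,m}=A$ (strictly $A\setminus\{j\}$, as you note) to the condition $|A_+\cap A_m|\ge d$ then matches the paper's stated conclusion.
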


Lemma \ref{lemma:subgaussianarrivals} shows that Assumption \ref{ass: Arrivalcondition} also holds also for Sub-Gaussian arrivals (proof in Appendix \ref{proofs}). 

\begin{lemma}[Subgaussian inter-arrival times]\label{lemma:subgaussianarrivals}
Suppose that each user $i \in A$ repeatedly arrives independently with $i.i.d.$ 1-sub-Gaussian inter-arrival times with mean $\theta_i$. Then $E_{j,m} = A$, and Assumption \ref{ass: Arrivalcondition} becomes $|A_+ \cap A_m|\ge d$ for $m\in M$.
\end{lemma}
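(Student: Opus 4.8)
The plan is to show that every $i\in A\setminus\{j\}$ lies in $E_{j,m}$, so that $E_{j,m}=A$; once this is done, substituting $E_{j,m}=A$ into $|E_{j,m}\cap A_+\cap A_m|\ge d$ and using $A_+,A_m\subseteq A$ collapses Assumption~\ref{ass: Arrivalcondition} to $|A_+\cap A_m|\ge d$ for each $m\in M$ (its dependence on $j$ disappearing). So fix $i\neq j$ and $m\in M$. Distinct users' arrival processes being independent, $N_i(\cdot)$ is independent of $S_j(n)$; and, inter-arrival times being strictly positive (as is implicit for an arrival process), $N_j(S_j(n))=n$ a.s. Conditioning on $S_j(n)$ and using the renewal duality $\{N_i(t)\le k\}=\{S_i(k+1)>t\}$ therefore gives
\[
\int_0^{+\infty}P\!\left(N_i(t)<q_{ij,m}(N_j(t))\right)dF_j^{(n)}(t)
\;=\;P\!\left(N_i(S_j(n))<q_{ij,m}(n)\right)
\;=\;P\!\left(S_i(\kappa_n+1)>S_j(n)\right),
\]
where $\kappa_n:=\lceil q_{ij,m}(n)\rceil-1$, i.e.\ $\kappa_n+1=\lceil q_{ij,m}(n)\rceil$. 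The only non-elementary input is the sublinear growth $q_{ij,m}(x)=o(x)$ (indeed of order $\ln x$) established in Appendix~\ref{sec:qij}; thus $\kappa_n+1=o(n)$.

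Next, split the last probability at the intermediate level $\tau_n:=\tfrac12 n\theta_j$ (using $\theta_j>0$):
\[
P\!\left(S_i(\kappa_n+1)>S_j(n)\right)\;\le\;P\!\left(S_i(\kappa_n+1)>\tau_n\right)+P\!\left(S_j(n)<\tau_n\right).
\]
Since $S_j(n)-n\theta_j$ is a sum of $n$ independent centered $1$-sub-Gaussian summands, hence $n$-sub-Gaussian, the one-sided Chernoff bound yields $P(S_j(n)<\tau_n)=P\!\left(S_j(n)-n\theta_j<-\tfrac12 n\theta_j\right)\le \exp(-n\theta_j^2/8)$, so $n^2P(S_j(n)<\tau_n)\to0$. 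For the other term, $\kappa_n+1=o(n)$ forces $(\kappa_n+1)\theta_i\le\tfrac14 n\theta_j$ for all large $n$, hence $\tau_n-(\kappa_n+1)\theta_i\ge\tfrac14 n\theta_j$; applying the $(\kappa_n+1)$-sub-Gaussian tail to $S_i(\kappa_n+1)-(\kappa_n+1)\theta_i$ gives $P(S_i(\kappa_n+1)>\tau_n)\le\exp\!\left(-n^2\theta_j^2/(32(\kappa_n+1))\right)$, and since $\kappa_n+1=o(n)$ we have $n^2/\big((\kappa_n+1)\ln n\big)\to\infty$, so $n^2P(S_i(\kappa_n+1)>\tau_n)\to0$ as well. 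Combining, $\limsup_{n\to\infty}n^2\int_0^{+\infty}P(N_i(t)<q_{ij,m}(N_j(t)))\,dF_j^{(n)}(t)=0<+\infty$, so $i\in E_{j,m}$; as $i\neq j$ and $m$ were arbitrary, $E_{j,m}=A$ and the reduction of Assumption~\ref{ass: Arrivalcondition} stated above follows.

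This argument is the sub-Gaussian counterpart of the Poisson--Gamma computation underlying Lemma~\ref{lemma:exponentialarrivals}, and I expect no conceptual difficulty. The one genuinely substantive ingredient is the sublinear growth $q_{ij,m}(x)=o(x)$ — the asymptotics of the lower-branch Lambert $W$ function, $\mathcal{W}_{-1}(z)\sim\ln(-z)$ as $z\to0^-$ — which is the content of Appendix~\ref{sec:qij} and which must also absorb whatever slow (at most poly-logarithmic) dependence $c_{m,t}$ carries in $t$. The remaining points are bookkeeping: the identity $N_j(S_j(n))=n$ (valid once inter-arrival times are strictly positive, which is natural for an arrival process and compatible with $\theta_j>0$), and the verification that the $(\kappa_n+1)$-sub-Gaussian tail still beats the $n^2$ normalization — which works precisely because $\kappa_n+1$ grows only sublinearly (in fact logarithmically) in $n$.
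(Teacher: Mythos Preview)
There is a genuine gap at your first displayed equality. With $h(t):=P\bigl(N_i(t)<q_{ij,m}(N_j(t))\bigr)$, the quantity in the definition of $E_{j,m}$ is $\int_0^\infty h(t)\,dF_j^{(n)}(t)=E[h(S_j(n))]$, where $h$ is already a \emph{deterministic} function of $t$: the probability defining it has integrated out the randomness of both $N_i$ and $N_j$. When you subsequently average $h$ against the law of $S_j(n)$ you may not invoke the pathwise identity $N_j(S_j(n))=n$, because the $N_j$ appearing inside $h$ is, in effect, an independent copy of the renewal process that generates $S_j(n)$, not the same trajectory. Concretely, $\int_0^\infty h(t)\,dF_j^{(n)}(t)=P\bigl(N_i(T)<q_{ij,m}(N_j(T))\bigr)$ for $T\sim F_j^{(n)}$ \emph{independent} of $(N_i,N_j)$, and $N_j(T)$ is not $n$. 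Your right-hand side $P(N_i(S_j(n))<q_{ij,m}(n))=\int_0^\infty P(N_i(t)<q_{ij,m}(n))\,dF_j^{(n)}(t)$ has silently replaced the random $N_j(t)$ by the constant $n$, which is not justified by conditioning on $S_j(n)$.

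The paper's proof handles exactly this point by keeping $h(t)$ as a function of $t$ and performing two nested splits: first the outer $t$-integral is split at $n(\theta_j-\epsilon)$ (your $\tau_n$ plays the same role), and then, inside $h(t)$, one conditions on $N_j(t)$ and splits again at $t/(\theta_j-\epsilon)$ (this is the content of Lemma~\ref{lemma:middle1}), so that $q_{ij,m}(N_j(t))$ can be replaced by $q_{ij,m}$ of a deterministic threshold before it is compared with $N_i(t)$. Each split is controlled by a Hoeffding/sub-Gaussian tail of precisely the kind you use, and the sublinearity of $q_{ij,m}$ enters just as in your final paragraph. So your toolkit and your tail estimates are the right ones and mirror the paper's; what is missing is the inner conditioning on $N_j(t)$, without which the collapse to a single probability in your first line does not hold.
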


\subsection{Operationalizable condition for bounded regret}

Note that the condition described in Assumption \ref{ass: Arrivalcondition} of Section \ref{sec:arrivalModel} ($|E_{j,m}\cap A_+ \cap A_m|\ge d$) serves as a counterpart to the condition $|A_m|\ge d$ for $m\in M$ in Section \ref{sec:haopapinireview}; if we further assume that all user arrival rates are of the same order (which results in $E_{j,m} = A$ (e.g., Lemma \ref{lemma:exponentialarrivals} and \ref{lemma:subgaussianarrivals})), that condition becomes $|A_+ \cap A_m|\ge d$.
However, this condition cannot be verified, as $A_m$ is unknown: if $A_m$ were known, exploration would be unnecessary.

Theorem \ref{armrequirement} provides a path to an operationalizable condition: Assumption \ref{ass: Arrivalcondition} is highly likely to be satisfied if we are given a sufficiently large number of users in $A_+$ compared to the number of arms. Its proof is deferred to Appendix \ref{proofs}.  

\begin{theorem}\label{armrequirement}
Suppose that the optimal arms associated with users $\{m^*_j: j \in A\}$ are independently and uniformly distributed over $A$ and user arrival rates are of the same order.  
If $|A_+| \ge |M|d+\max \{|M|d , 4\left(|M|\ln |M|+ |M|\ln\frac{1}{\epsilon} +d\right) \}$ holds, then $
P(\{|A_m\cap A_+|\ge d\; \forall m\in M \}) \geq 1-\epsilon$.
\end{theorem}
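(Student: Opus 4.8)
The plan is to recast the quantity of interest as a balls-into-bins count and bound it with a Chernoff lower-tail inequality followed by a union bound over arms. First I would note that, since all user arrival rates are of the same order, $E_{j,m}=A$ for every $j$ and $m$ (this is what Lemmas~\ref{lemma:exponentialarrivals} and~\ref{lemma:subgaussianarrivals} establish in the exponential and sub-Gaussian cases), so the event in Assumption~\ref{ass: Arrivalcondition} is exactly $\{|A_m\cap A_+|\ge d\text{ for all }m\in M\}$; hence it suffices to lower-bound the probability of this event by $1-\epsilon$. Because $A_+\subseteq A$ and the optimal-arm labels $\{m^*_j\}_{j\in A}$ are assumed i.i.d.\ uniform over $M$, for each fixed arm $m$ the count $|A_m\cap A_+|=\sum_{j\in A_+}\mathbf{1}\{m^*_j=m\}$ is $\mathrm{Binomial}(|A_+|,1/|M|)$ with mean $\mu:=|A_+|/|M|$; independence of the labels is precisely what lets a Chernoff bound be applied arm by arm.

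Next I would apply the multiplicative Chernoff lower-tail bound to each arm. The role of the term $|M|d$ inside the $\max$ is to force $|A_+|\ge 2|M|d$, i.e.\ $\mu\ge 2d$, so that the threshold $d$ sits a constant factor below the mean and
\[
P\bigl(|A_m\cap A_+|< d\bigr)\ \le\ \exp\!\Bigl(-\tfrac{(\mu-d)^2}{2\mu}\Bigr).
\]
Writing $|A_+|=|M|d+s$ with $s:=|A_+|-|M|d$ gives $\mu-d=s/|M|$, and the same inequality $s\ge|M|d$ yields $2\mu\le 4s/|M|$, so the exponent is at least $s/(4|M|)$. A union bound over the $|M|$ arms then gives
\[
P\bigl(\exists\,m\in M:\ |A_m\cap A_+|< d\bigr)\ \le\ |M|\,\exp\!\bigl(-\tfrac{s}{4|M|}\bigr),
\]
and the second term inside the $\max$, namely $s\ge 4\bigl(|M|\ln|M|+|M|\ln\tfrac1\epsilon+d\bigr)$, is what forces the right-hand side below $\epsilon$ (the harmless extra $4d$ absorbs the slack between the clean exponent $s/(4|M|)$ and the exact $(\mu-d)^2/(2\mu)$). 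Passing to the complement yields $P(|A_m\cap A_+|\ge d\ \forall m)\ge 1-\epsilon$.

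I do not anticipate a genuine conceptual obstacle; the only care needed is the bookkeeping that makes exactly the stated bound $|M|d+\max\{|M|d,\,4(|M|\ln|M|+|M|\ln\tfrac1\epsilon+d)\}$ emerge from the Chernoff estimate, keeping clear which of the two $\max$-terms controls which effect — the first pushing $d$ a constant factor below $\mu$ so the Chernoff exponent is $\Omega(\mu)$, the second supplying the $\ln|M|$ union-bound slack and the $\ln(1/\epsilon)$ failure budget. One should also remark, to be safe, that opting in is taken to be independent of a user's optimal arm, so that $\{m^*_j\}_{j\in A_+}$ indeed inherits the i.i.d.\ uniform law; with that observation in place the argument is routine.
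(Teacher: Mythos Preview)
Your proposal is correct and follows essentially the same route as the paper: model $|A_m\cap A_+|$ as $\mathrm{Binomial}(|A_+|,1/|M|)$, bound the lower tail by $\exp\bigl(-(\mu-d)^2/(2\mu)\bigr)$, then union-bound over $m\in M$ and unwind the algebra via $|A_+|=|M|d+s$ with the two $\max$-terms playing exactly the roles you identify.

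The one minor technical difference is how the tail bound is obtained. The paper does not apply the binomial Chernoff bound directly; instead it upper-bounds the binomial CDF by a Poisson CDF (via $\binom{a}{k}\le a^k/k!$ and $1+x\le e^x$), picking up an extra factor $\exp(d/|M|)$, and then invokes the Poisson lower-tail inequality $P(Z\le\lambda-x)\le e^{-x^2/(2\lambda)}$. That extra $\exp(d/|M|)$ is precisely what forces the additive ``$+d$'' inside the second $\max$-term in the hypothesis. Your direct Chernoff route avoids this factor, so in your argument the $+4d$ really is superfluous slack (your remark that it ``absorbs the slack between the clean exponent $s/(4|M|)$ and the exact $(\mu-d)^2/(2\mu)$'' is not quite the right explanation---the clean exponent is already a lower bound, so no absorption is needed there). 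Either way the stated hypothesis is sufficient, and your argument is in fact slightly tighter.
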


\section{CFUCB algorithm}\label{sec:CFUCB}

We now introduce our main algorithm we call the Counterfactual-UCB (CFUCB) recommendation algorithm.
In a typical UCB-based algorithm (e.g., \cite{auer2002using} for the multi-armed bandit problem), 
each user forms a confidence interval solely based on her own experience, which one may call the \textit{self-experience based confidence interval}. For all users who opted in, the recommender not only knows the user's self-experience based confidence interval, but it can also construct a confidence interval based solely on other users' experiences. We call this the \textit{counterfactual confidence interval}.

\noindent \textit{\textbf{Self-experience based Confidence interval.}} \space \space Denote by
 $\overline{Y}_{j,m}(t)=\frac{\sum_{k=1}^{N_{j,m}(t) } Y_{j,m}(k)}{N_{j,m}(t)}$
the empirical mean reward of user $j$ on arm $m$,
and define the width 
${w}^{se}_{j,m}(t):=\sqrt{\frac{\ln N_j(t)}{ N_{j,m}(t)}}$. 
Defining $\overline{Y}_{j,m}(t)+{w}^{se}_{j,m}(t)$ as $ucb^{se}_{j,m}(t)$ and $\overline{Y}_{j,m}(t)-{w}^{se}0_{j,m}(t)$ as $lcb^{se}_{j,m}(t)$, the \textit{self-experienced confidence interval} is $CI^{se}_{j,m}(t):=[lcb^{se}_{j,m}(t), ucb^{se}_{j,m}(t)]$. 

\medskip
\noindent\textit{\textbf{Counterfactual Confidence interval.}}  \space Define ${A_m}(d, t):= \{j \in A_+:|\{i \in A_+: N_{i,m}(t)>N_{j,m}(t)\}|<d\}$.  This set includes the top $d$ users in $A_+$ for arm $m$ with all ties at the bottom being included. Since Assumption \ref{ass: Arrivalcondition} implies that $|A_+|\ge d+1$, $|{A_m}(d+1, t)|\ge d+1$. Let $E_{j,m}(t)$ be any arbitrarily chosen $d$-size subset 
 of ${A_m}(d+1, t)\setminus j$. From SCO (Assumption \ref{assumption: SCO}), we are given $\{a_{ji}\}_{i\in E_{j,m}}$ such that $\mu_{j,m} = \sum_{i\in E_{j,m}(t)}a_{ji} \mu_{i,m}$. 
Define $ \widehat{Y}_{j,m}(t):=\sum_{i\in E_{j,m}(t)} a_{ji} \overline{Y}_{i,m}(t)$ and call it the \textit{counterfactual mean reward of user $j$ for arm $m$}. The width  $w^{cf}_{j,m}(t)$ of the corresponding
counterfactual confidence interval is chosen as $w^{cf}_{j,m}(t):=\sqrt{\frac{2\ln d + 4\ln N_{j}(t)}{N_{j,m}^{\min}(t)/c_{m,t}^2}}$, where $c_{m,t} := \sum_{i\in E_{j,m}(t)} |a_{ji}|$, and $N_{j,m}^{\min}(t) := \min_{i\in E_{j,m}(t)} N_{i,m}(t)$. 
The \textit{counterfactual confidence interval} is defined as $CI^{cf}_{j,m}(t):=[lcb_{j,m}^{cf}(t), ucb^{cf}_{j,m}(t)]$, where
$ucb^{cf}_{j,m}(t) := \widehat{Y}_{j,m}(t)+w^{cf}_{j,m}(t)$ and $lcb^{cf}_{j,m}(t) := \widehat{Y}_{j,m}(t)-w^{cf}_{j,m}(t)$. 

For future references, let's restate the upper confidence bounds we defined above:
\begin{eqnarray}\label{ucbeq}
    ucb^{se}_{j,m}(t) := \overline{Y}_{j,m}(t) +w^{se}_{j,m}(t),\; ucb^{cf}_{j,m}(t):=\widehat{Y}_{j,m}(t) +w^{cf}_{j,m}(t).    
\end{eqnarray}

\noindent\textit{\textbf{The Counterfactual UCB (CFUCB) algorithm.}} Let $s_k$ be the time of the $k$th arrival from $A$, $a_k$ be the user that arrives at $s_k$, $m_k$ be the arm pulled at $s_k$, and $r_k$ be the corresponding reward. Note that $m_k$ and $r_k$ are known to the recommender if and only if user $a_k$ has opted in, i.e., $a_k\in A_+$.

\normalem
\begin{algorithm}
\label{cooperativeAlgo}
\DontPrintSemicolon
\SetKwInOut {Input}{input} \SetKwInOut{ Output} {output}
\caption{CFUCB Algorithm\label{IR}}
\For{$k=1, 2, \ldots$}{
Observe $s_k$ and $a_k$\;
    \If{$a_k\in A_+$ (i.e., $a_k$ is a user who opted in)}{
        \For{$m=1, 2, \ldots, |M|$}{
            Compute $ucb^{se}_{(a_k, m)}(s_k)$ (Self-experienced upper confidence bound) according to Eq  (\ref{ucbeq})\;
            Compute $ucb^{cf}_{(a_k, m)}(s_k)$ (counterfactual upper confidence bound) according to Eq (\ref{ucbeq})\;
            $\widetilde{ucb}_{(a_k, m)}(s_k) = \min( ucb^{se}_{(a_k, m)}(s_k), ucb^{cf}_{(a_k, m)}(s_k))$\; 
        }
        Set $m_k = \arg\min_{m\in M} \{\widetilde{ucb}_{(a_k,m)}(s_k)\}$\;
        Recommend user $a_k$ pull the arm $m_k$ and obtain $r_k$\;
        Store $Y_{(a_k, m_k)}(N_{a_k,m_k}(s_k))=r_k$ for later use in lines \textbf{4} and \textbf{5}
    }
}

\end{algorithm}
\ULforem

\section{Analysis of CFUCB}\label{sec:AnalysisCFUCB}

We first start by describing how the confidence intervals are chosen. Following the spirit of \cite{auer2002using}, they bound the violation probability by the inverse square of the total number of pulls at time $t$. The proofs are deferred to Section \ref{proofs}.

\begin{lemma}[\cite{auer2002using}]\label{leamma:origin_CI}

For $\epsilon \ge  \sqrt{\frac{4\ln N_j(t)}{ N_{j,m}(t)}}   $,  $P(|\overline{Y}_{j,m}(t) -\mu_{j,m}|>\epsilon) \leq N_j(t)^{-2}$. 

\end{lemma}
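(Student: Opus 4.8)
The plan is to derive this as a standard Hoeffding/sub-Gaussian concentration bound for the empirical mean, combined with a union bound over the possible values of $N_{j,m}(t)$.  First I would recall that $\overline{Y}_{j,m}(t)$ is the average of $N_{j,m}(t)$ i.i.d.\ observations $Y_{j,m}(k) = \mu_{j,m} + \epsilon(k)$, where each $\epsilon(k)$ is zero-mean sub-Gaussian with variance proxy $\sigma^2$ (for the purpose of the stated constants one normalizes so that the relevant proxy is $1$, or folds $\sigma$ into the definition of the widths).  Conditioned on $N_{j,m}(t) = s$, the sub-Gaussian tail bound gives $P(|\overline{Y}_{j,m}(t) - \mu_{j,m}| > \epsilon \mid N_{j,m}(t) = s) \le 2\exp(-s\epsilon^2/2)$, and plugging in the hypothesis $\epsilon \ge \sqrt{4\ln N_j(t)/N_{j,m}(t)}$, i.e.\ $s\epsilon^2 \ge 4\ln N_j(t)$ on that event, bounds this by $2\exp(-2\ln N_j(t)) = 2 N_j(t)^{-4}$.

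Next I would handle the fact that $N_{j,m}(t)$ is itself random and that $\epsilon$ in the statement is allowed to depend on $N_{j,m}(t)$.  The clean way is a union bound over the at most $N_j(t)$ possible values $s \in \{1,\dots,N_j(t)\}$ of $N_{j,m}(t)$: define the "good" deviation threshold $\epsilon_s := \sqrt{4\ln N_j(t)/s}$ for each candidate count $s$, apply the per-$s$ bound $2 N_j(t)^{-4}$, and sum to get $P(\exists s \le N_j(t): |\text{avg of }s\text{ samples} - \mu_{j,m}| > \epsilon_s) \le N_j(t)\cdot 2 N_j(t)^{-4} = 2 N_j(t)^{-3} \le N_j(t)^{-2}$ for $N_j(t) \ge 2$ (the $N_j(t)=1$ case being trivial or handled separately).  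On the complement of this event, for the actual realized value $s = N_{j,m}(t)$ and any $\epsilon \ge \epsilon_s$ we have $|\overline{Y}_{j,m}(t) - \mu_{j,m}| \le \epsilon$, which is exactly the claimed bound.

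The main obstacle — really the only subtlety — is the adaptivity: $N_{j,m}(t)$ is a stopping-time-like quantity determined by the algorithm's past choices, so one cannot naively apply a fixed-$n$ concentration inequality.  The union-bound-over-counts argument above is the standard fix (this is precisely the device used in the analysis of UCB1 in \cite{auer2002using}), but I would want to state carefully why the per-$s$ bound still applies: the observations $Y_{j,m}(1),\dots,Y_{j,m}(s)$ are the first $s$ pulls of arm $m$ by user $j$, their noises are i.i.d.\ and, crucially, independent of the event $\{N_{j,m}(t) \ge s\}$ in the sense needed — or, more robustly, one invokes a maximal inequality (e.g.\ via Doob's inequality applied to the sub-Gaussian martingale $\sum_{k\le s}\epsilon(k)$) to control $\max_{s\le N_j(t)}$ directly without independence gymnastics.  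Everything else is bookkeeping on constants: checking that the factor of $2$ from the two-sided bound and the $N_j(t)$ from the union bound are absorbed by going from the exponent $4$ (coming from the "$4\ln N_j(t)$" in the hypothesis) down to the exponent $2$ in the conclusion, which holds comfortably for all $N_j(t) \ge 1$.
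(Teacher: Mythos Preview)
Your first paragraph is essentially the paper's entire proof: it simply invokes the 1-sub-Gaussian two-sided tail bound $P(|\overline{Y}_{j,m}(t)-\mu_{j,m}|>\epsilon)\le 2\exp(-N_{j,m}(t)\epsilon^2/2)$ and solves for the $\epsilon$ that makes the right-hand side at most $N_j(t)^{-2}$. The paper does \emph{not} perform any union bound over the possible values of $N_{j,m}(t)$; the adaptivity concern you raise is left implicit (and, strictly speaking, the paper also silently drops the leading factor of $2$ when it reads off the constant $4$).

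There is, however, a genuine arithmetic slip in your union-bound step: $2\exp(-2\ln N_j(t)) = 2N_j(t)^{-2}$, not $2N_j(t)^{-4}$. With the correct value, summing over the at most $N_j(t)$ candidate counts gives $N_j(t)\cdot 2N_j(t)^{-2}=2N_j(t)^{-1}$, which does \emph{not} yield the claimed $N_j(t)^{-2}$. So the more careful route you propose, while methodologically the right instinct (it is indeed the device in Auer et al.), is not compatible with the constant $4$ in the stated width; to make the union bound go through you would need something like $\sqrt{8\ln N_j(t)/N_{j,m}(t)}$ in the hypothesis so that the per-$s$ bound is $O(N_j(t)^{-4})$ and can absorb the extra factor of $N_j(t)$. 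In short: your plan matches the paper's one-line argument, but the additional rigor you try to add both goes beyond what the paper does and, as written, does not close.
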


\begin{lemma}\label{lemma:CF_CI} Let $c_{m,t} := \sum_{i\in E_{j,m}(t)} |a_i^{(j)}|$. Then, for $\epsilon \ge \sqrt{\frac{2\ln d + 4\ln N_{j}(t)}{N_{j,m}^{\min}(t)/c_{m,t}^2}} $, $P(|\widehat{Y}_{j,m}(t) -\mu_{j,m}|>\epsilon) \leq N_j(t)^{-2}$. 
\end{lemma}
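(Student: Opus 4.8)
The plan is to reduce Lemma~\ref{lemma:CF_CI} to Lemma~\ref{leamma:origin_CI} by exploiting the linear structure $\mu_{j,m}=\sum_{i\in E_{j,m}(t)} a_{ji}\,\mu_{i,m}$ supplied by the SCO (Assumption~\ref{assumption: SCO}) together with the definition $\widehat{Y}_{j,m}(t)=\sum_{i\in E_{j,m}(t)} a_{ji}\,\overline{Y}_{i,m}(t)$. First I would write
\[
\widehat{Y}_{j,m}(t)-\mu_{j,m}=\sum_{i\in E_{j,m}(t)} a_{ji}\bigl(\overline{Y}_{i,m}(t)-\mu_{i,m}\bigr),
\]
so the deviation of the counterfactual estimate is a fixed (data-independent, since the $a_{ji}$ come from the oracle on contexts/histories) linear combination of the per-user empirical deviations. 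Conditioning on the pull counts $\{N_{i,m}(t)\}_{i\in E_{j,m}(t)}$, each $\overline{Y}_{i,m}(t)-\mu_{i,m}$ is an average of $N_{i,m}(t)$ i.i.d.\ $\sigma$-sub-Gaussian noises, hence $\overline{Y}_{i,m}(t)-\mu_{i,m}$ is sub-Gaussian with variance proxy $\sigma^2/N_{i,m}(t)$, and the linear combination is sub-Gaussian with variance proxy $\sigma^2\sum_i a_{ji}^2/N_{i,m}(t)$.

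Next I would bound the aggregate variance proxy using $N_{i,m}(t)\ge N_{j,m}^{\min}(t)$ for all $i\in E_{j,m}(t)$ and $\sum_i a_{ji}^2\le\bigl(\sum_i|a_{ji}|\bigr)^2=c_{m,t}^2$, giving variance proxy at most $\sigma^2 c_{m,t}^2/N_{j,m}^{\min}(t)$. A standard sub-Gaussian tail bound then yields, for any $\epsilon>0$,
\[
P\bigl(|\widehat{Y}_{j,m}(t)-\mu_{j,m}|>\epsilon \,\big|\, \{N_{i,m}(t)\}\bigr)\le 2\exp\!\Bigl(-\tfrac{\epsilon^2 N_{j,m}^{\min}(t)}{2\sigma^2 c_{m,t}^2}\Bigr).
\]
Plugging in $\epsilon=\sqrt{(2\ln d+4\ln N_j(t))\,c_{m,t}^2/N_{j,m}^{\min}(t)}$ (absorbing $\sigma$ as in Lemma~\ref{leamma:origin_CI}, or taking $\sigma^2\le 1/2$ to match constants) makes the exponent equal to $-(\ln d+2\ln N_j(t))$, so the conditional bound is $2 d^{-1} N_j(t)^{-4}$. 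The factor $d$ in the width is precisely what is needed for a union bound over the $d$ members of $E_{j,m}(t)$ if one prefers to apply Lemma~\ref{leamma:origin_CI} directly to each $i$ rather than re-derive the sub-Gaussian bound; either route works, and I would likely present the union-bound version since it cites Lemma~\ref{leamma:origin_CI} verbatim: on the event $\cap_{i\in E_{j,m}(t)}\{|\overline{Y}_{i,m}(t)-\mu_{i,m}|\le \epsilon/c_{m,t}\}$, the triangle inequality and $\sum_i|a_{ji}|=c_{m,t}$ give $|\widehat{Y}_{j,m}(t)-\mu_{j,m}|\le\epsilon$.

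Finally I would remove the conditioning on $\{N_{i,m}(t)\}$: since the derived bound $2 d^{-1} N_j(t)^{-4}\le N_j(t)^{-2}$ holds for every realization of those counts (using $N_j(t)\ge 1$ and $d\ge 1$), it survives taking expectations, so $P(|\widehat{Y}_{j,m}(t)-\mu_{j,m}|>\epsilon)\le N_j(t)^{-2}$. The main obstacle — really the only delicate point — is handling the randomness of the pull counts and of the oracle-chosen weights $a_{ji}$ correctly: one must argue that the $a_{ji}$ are measurable with respect to contexts and histories in a way that keeps $\overline{Y}_{i,m}(t)-\mu_{i,m}$ conditionally mean-zero sub-Gaussian, and that the set $E_{j,m}(t)$ (which depends on the $N_{i,m}(t)$) can be fixed before invoking the tail bound; conditioning on the $\sigma$-algebra generated by all arrivals, pull counts, and contexts up to time $t$, and noting the noise $\epsilon(k)$ is independent of all of it, resolves this cleanly. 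The slack between $N_j(t)^{-4}$ and the claimed $N_j(t)^{-2}$ (and the unused $\ln d$ when one does not union bound) is harmless and I would simply note it rather than optimize constants.
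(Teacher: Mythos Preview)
Your proposal is correct and takes essentially the same approach as the paper: the paper expresses $\widehat{Y}_{j,m}(t)-\mu_{j,m}$ as the weighted sum of per-user deviations, allocates $\epsilon/c_{m,t}$ to each of the $d$ users in $E_{j,m}(t)$ via the triangle inequality, applies the sub-Gaussian tail bound with $N_{i,m}(t)\ge N_{j,m}^{\min}(t)$, and then uses $1-(1-p)^d\le dp$ (equivalently, your union bound) to pick up the $\ln d$ term. Your treatment of conditioning on pull counts and oracle weights is in fact more careful than the paper's, which implicitly assumes independence and fixed counts.
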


Lemmas \ref{Pre_WrongArmCond} and \ref{lemma:WrongArmCond} are the key results in this paper, in that they provide intuition of why bounded regret is achieved.

\begin{lemma}\label{Pre_WrongArmCond}
 If $CI^{se}_{i,n}(t)$ and ${CI}^{cf}_{i,n}(t)$ both include the true mean $\mu_{i,n}$ for all $i\in A$ and $n\in M$, user $j$ pulls arm $m$ only if $\min\left( 2\sqrt{\frac{4\ln N_j(t)}{ N_{j,m}(t)}} ,2 \sqrt{\frac{2\ln d + 4\ln N_{j}(t)}{N_{j,m}^{\min}(t)/c_{m,t}^2}} \right) \ge \Delta_{j,m}$, i.e.,  $N_{j,m}(t) \le  \frac{16\ln N_j(t)}{{\Delta_{j,m}}^2}$ and $N^{min}_{j,m}(t) \le  \frac{8 c^2_{m,t}  (\ln d + 2\ln N_{j}(t))}{{\Delta_{j,m}}^2}$.
\end{lemma}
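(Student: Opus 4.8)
\textbf{Proof plan for Lemma \ref{Pre_WrongArmCond}.}

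The plan is to argue by contradiction via the standard UCB ``optimism'' argument, adapted to the fact that CFUCB plays the arm minimizing the tighter of two upper confidence bounds. Suppose all confidence intervals $CI^{se}_{i,n}(t)$ and $CI^{cf}_{i,n}(t)$ contain their respective true means $\mu_{i,n}$, and suppose user $j$ pulls arm $m$ at time $t$ (i.e. at some arrival $s_k$ with $a_k = j$). By the algorithm, $m = \arg\min_{n\in M}\widetilde{ucb}_{j,n}(t)$ where $\widetilde{ucb}_{j,n}(t) = \min(ucb^{se}_{j,n}(t), ucb^{cf}_{j,n}(t))$. In particular, $\widetilde{ucb}_{j,m}(t) \le \widetilde{ucb}_{j,m_j^*}(t)$. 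First I would lower-bound the left side: since both $CI^{se}_{j,m}$ and $CI^{cf}_{j,m}$ contain $\mu_{j,m}$, we have $ucb^{se}_{j,m}(t) \ge \mu_{j,m}$ and $ucb^{cf}_{j,m}(t) \ge \mu_{j,m}$ (the upper bound of an interval containing a point is at least that point), so $\widetilde{ucb}_{j,m}(t) = \min(ucb^{se}_{j,m}, ucb^{cf}_{j,m}) \ge \mu_{j,m}$.

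Next I would upper-bound the right side. We have $\widetilde{ucb}_{j,m_j^*}(t) \le ucb^{se}_{j,m_j^*}(t) = \overline{Y}_{j,m_j^*}(t) + w^{se}_{j,m_j^*}(t)$, and since $CI^{se}_{j,m_j^*}$ contains $\mu_{j,m_j^*}$ we get $\overline{Y}_{j,m_j^*}(t) \le \mu_{j,m_j^*} + w^{se}_{j,m_j^*}(t)$, hence $\widetilde{ucb}_{j,m_j^*}(t) \le \mu_{j,m_j^*} + 2w^{se}_{j,m_j^*}(t)$. (One could alternatively route through $ucb^{cf}_{j,m_j^*}$; the point is just that $\widetilde{ucb}_{j,m_j^*}(t) \le \mu_{j,m_j^*} + 2\min(w^{se}_{j,m_j^*}, w^{cf}_{j,m_j^*})(t)$, and for the argument either branch suffices.) Chaining the two bounds gives $\mu_{j,m} \le \mu_{j,m_j^*} + 2w^{se}_{j,m_j^*}(t)$, i.e. $\Delta_{j,m} = \mu_{j,m_j^*} - \mu_{j,m} \le 2w^{se}_{j,m_j^*}(t)$ --- but this bounds the width at the optimal arm, not arm $m$. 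The cleaner route, which I would actually take, is to note that the algorithm also guarantees that the pulled arm's own combined confidence interval is ``not yet resolved'': more precisely, I would instead directly compare $\widetilde{ucb}_{j,m}(t)$ against $\widetilde{ucb}_{j,m_j^*}(t)$ and use that $\widetilde{ucb}_{j,m_j^*}(t)$ is sandwiched near $\mu_{j,m_j^*} \ge \mu_{j,m}$, then separately observe that because arm $m$ was selected, its combined UCB must lie below $\widetilde{ucb}_{j,m_j^*}(t) \le \mu_{j,m_j^*} + 2w_{\min}$; combined with $\widetilde{ucb}_{j,m}(t) \ge \mu_{j,m}$ this is only consistent if the relevant width at arm $m$ is large. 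Concretely, writing $\widetilde{ucb}_{j,m}(t) = \min(\overline{Y}_{j,m} + w^{se}_{j,m}, \widehat{Y}_{j,m} + w^{cf}_{j,m})(t)$ and using $\overline{Y}_{j,m}(t) \ge \mu_{j,m} - w^{se}_{j,m}(t)$ and $\widehat{Y}_{j,m}(t) \ge \mu_{j,m} - w^{cf}_{j,m}(t)$ (from the intervals containing the mean), we get $\widetilde{ucb}_{j,m}(t) \ge \mu_{j,m} + \min(-w^{se}_{j,m}, -w^{cf}_{j,m})(t) $... this needs care, so the honest statement is: from $\widetilde{ucb}_{j,m}(t) \le \widetilde{ucb}_{j,m_j^*}(t)$ together with $ucb^{se}_{j,m}(t) \le \overline{Y}_{j,m} + w^{se}_{j,m} \le \mu_{j,m} + 2w^{se}_{j,m}$ being \emph{one} candidate for $\widetilde{ucb}_{j,m}$, and similarly for the cf branch, and a matching lower bound $\widetilde{ucb}_{j,m_j^*}(t) \ge \mu_{j,m_j^*} \ge \mu_{j,m}$ (since that interval contains its mean from below as well), one deduces $\mu_{j,m} \le \min(\mu_{j,m} + 2w^{se}_{j,m}(t),\, \mu_{j,m} + 2w^{cf}_{j,m}(t))$ is automatic but we need the strict selection: because $m$ beat $m_j^*$, actually $\widetilde{ucb}_{j,m}(t) \le \widetilde{ucb}_{j,m_j^*}(t) \le ucb^{se}_{j,m_j^*}(t) \le \mu_{j,m_j^*} + 2w^{se}_{j,m_j^*}(t)$; but the quantity we control on the left, $\widetilde{ucb}_{j,m}(t)$, exceeds $\mu_{j,m}$, giving $\mu_{j,m} - \mu_{j,m_j^*} \le 2 w^{se}_{j,m_j^*}(t)$, a contradiction-free but vacuous inequality. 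Hence the correct pairing is the textbook one: $\mu_{j,m_j^*} \le ucb^{se}_{j,m_j^*}(t)$...

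\textbf{Where the real work is.} I would follow \cite{auer2002using} exactly: on the good event, $\mu_{j,m_j^*} \le ucb^{se}_{j,m_j^*}(t)$ and $\mu_{j,m_j^*} \le ucb^{cf}_{j,m_j^*}(t)$, so $\mu_{j,m_j^*} \le \widetilde{ucb}_{j,m_j^*}(t)$. If arm $m$ is pulled then $\widetilde{ucb}_{j,m}(t) \le \widetilde{ucb}_{j,m_j^*}(t)$, whence $\widetilde{ucb}_{j,m}(t) \ge \mu_{j,m_j^*}$... no: we want $\widetilde{ucb}_{j,m}(t) \le \widetilde{ucb}_{j,m_j^*}(t)$ and $\widetilde{ucb}_{j,m_j^*}(t) \ge \mu_{j,m_j^*}$ gives nothing. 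The operative fact is that $\widetilde{ucb}_{j,m}(t) \le \widetilde{ucb}_{j,m_j^*}(t)$ and, crucially, $\widetilde{ucb}_{j,m_j^*}(t)$ is \emph{close to $\mu_{j,m_j^*}$ from above} only up to its width; but since it also is $\ge \mu_{j,m_j^*} > \mu_{j,m}$, the selection forces $\widetilde{ucb}_{j,m}(t) \ge \mu_{j,m}$ (lower bound just argued), so $\mu_{j,m} \le \widetilde{ucb}_{j,m}(t) \le \widetilde{ucb}_{j,m_j^*}(t)$. Now use $\widetilde{ucb}_{j,m_j^*}(t) \le \mu_{j,m_j^*} + 2\min(w^{se}_{j,m_j^*},w^{cf}_{j,m_j^*})(t)$ to conclude $\Delta_{j,m} \le 2\min(w^{se}_{j,m_j^*}, w^{cf}_{j,m_j^*})(t) \le 2 w^{se}_{j,m_j^*}(t)$. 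To turn this into a statement about arm $m$'s widths, I would then invoke the companion observation used in the analysis of such algorithms: an arm $m \ne m_j^*$ gets pulled only when its \emph{own} widths are still large enough that $\widetilde{ucb}_{j,m}(t) \ge \mu_{j,m_j^*}$, which combined with $\widetilde{ucb}_{j,m}(t) \le \overline{Y}_{j,m}(t) + w^{se}_{j,m}(t) \le \mu_{j,m} + 2 w^{se}_{j,m}(t)$ and the analogous cf bound yields $\Delta_{j,m} \le 2 w^{se}_{j,m}(t)$ and $\Delta_{j,m} \le 2 w^{cf}_{j,m}(t)$ simultaneously, hence $\Delta_{j,m} \le \min(2 w^{se}_{j,m}(t), 2 w^{cf}_{j,m}(t))$. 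Substituting $w^{se}_{j,m}(t) = \sqrt{\ln N_j(t)/N_{j,m}(t)}$ (note the factor: the lemma's $4\ln N_j(t)$ comes from the $2w$ squared, i.e. $4 \cdot \ln N_j(t)$ --- I would double-check the constant against the definition, since the width in the algorithm uses $\ln N_j(t)$ while Lemma \ref{leamma:origin_CI} uses $4\ln N_j(t)$, so the $2w^{se}$ bound gives $\Delta_{j,m}^2 \le 4\ln N_j(t)/N_{j,m}(t) \cdot (\text{const})$ --- reconciling this constant is the one genuinely fiddly point) and $w^{cf}_{j,m}(t) = \sqrt{(2\ln d + 4\ln N_j(t))/(N^{\min}_{j,m}(t)/c_{m,t}^2)}$, squaring and rearranging $\Delta_{j,m} \le 2w^{se}_{j,m}(t)$ and $\Delta_{j,m} \le 2w^{cf}_{j,m}(t)$ gives exactly $N_{j,m}(t) \le 16\ln N_j(t)/\Delta_{j,m}^2$ and $N^{\min}_{j,m}(t) \le 8c_{m,t}^2(\ln d + 2\ln N_j(t))/\Delta_{j,m}^2$. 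The main obstacle, then, is not conceptual but bookkeeping: correctly chaining the selection inequality so that the width bound lands on arm $m$ (not $m_j^*$) --- handled by the standard two-sided use of the good event --- and tracking the numerical constants between the widths as defined in the algorithm and the concentration Lemmas \ref{leamma:origin_CI}--\ref{lemma:CF_CI}.
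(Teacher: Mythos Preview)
The source of essentially all your back-and-forth is that line~8 of Algorithm~\ref{cooperativeAlgo} reads $\arg\min$ when it should read $\arg\max$: this is a typo. A UCB algorithm selects the arm with the \emph{largest} index, and indeed the paper's own proof of this lemma opens with the inclusion $\{\text{user }j\text{ pulls arm }m\}\subseteq\{\widetilde{ucb}_{j,m}(t) \ge\widetilde{ucb}_{j,m^*_j}(t)\}$, i.e.\ the $\arg\max$ reading. Once you adopt $\arg\max$, the inequality you eventually assert without justification, $\widetilde{ucb}_{j,m}(t) \ge \mu_{j,m^*_j}$, is immediate: $\widetilde{ucb}_{j,m}(t) \ge \widetilde{ucb}_{j,m^*_j}(t) \ge \mu_{j,m^*_j}$, the second step by the good event applied to arm $m^*_j$. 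Combining this with $\widetilde{ucb}_{j,m}(t) \le \mu_{j,m} + 2w^{se}_{j,m}(t)$ and $\widetilde{ucb}_{j,m}(t) \le \mu_{j,m} + 2w^{cf}_{j,m}(t)$ (the good event applied to arm $m$) gives $\Delta_{j,m} \le \min(2w^{se}_{j,m}(t), 2w^{cf}_{j,m}(t))$, exactly as you conclude in your last paragraph. This is precisely the paper's argument; the paper merely packages it by noting that the chain $\widetilde{lcb}_{j,m}(t) \le \mu_{j,m} \le \mu_{j,m^*_j} \le \widetilde{ucb}_{j,m^*_j}(t)\le \widetilde{ucb}_{j,m}(t)$ places \emph{both} means inside $CI^{se}_{j,m}(t) \cap CI^{cf}_{j,m}(t)$, so the gap is at most the smaller of the two full widths. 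Under the literal $\arg\min$ reading your chain really is vacuous, so your instinct that something was off was right --- it just wasn't your argument.

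Your observation about the constant mismatch is also correct: the algorithm description sets $w^{se}_{j,m}(t) = \sqrt{\ln N_j(t)/N_{j,m}(t)}$, whereas Lemma~\ref{leamma:origin_CI} needs $\sqrt{4\ln N_j(t)/N_{j,m}(t)}$ for the $N_j(t)^{-2}$ tail. The statement and proof of Lemma~\ref{Pre_WrongArmCond} implicitly use the latter (so that $(2w^{se})^2 = 16\ln N_j(t)/N_{j,m}(t)$ gives the $16$ in the bound); this is a constant-level inconsistency in the paper, not in your reasoning.
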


\begin{lemma}\label{lemma:WrongArmCond} 
  If $CI^{se}_{i,n}(t)$ and ${CI}^{cf}_{i,n}(t)$ both include the true mean $\mu_{i,n}$ for all $i\in A$ and $n\in M$, then
  a user $j$ who arrives at time $t$ pulls a non-optimal arm $m$, i.e., one
with $\Delta_{j,m}>0$, only if

\begin{equation}\label{eqofWrongArmCond}
 \min_{E_{j,m}\cap A_+ \cap A_m}\{ N_{i}(t)- (\sum_{n\neq m}\frac{16  }{{\Delta_{i,n}}^2}  )\ln N_i(t) \} \le   \frac{8 c^2_{m,t}  (\ln d + 2\ln N_{j}(t))}{{\Delta_{j,m}}^2}.  
\end{equation}
\end{lemma}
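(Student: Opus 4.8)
The plan is to combine Lemma \ref{Pre_WrongArmCond} with a pigeonhole-type argument over the counterfactual sponsor set $E_{j,m}(t)$. Start from the hypothesis that all self-experience and counterfactual confidence intervals are valid (contain the true means), so Lemma \ref{Pre_WrongArmCond} applies: if user $j$ pulls the non-optimal arm $m$ at time $t$, then in particular $N^{\min}_{j,m}(t) \le \frac{8 c^2_{m,t}(\ln d + 2\ln N_j(t))}{\Delta_{j,m}^2}$. Since $N^{\min}_{j,m}(t) = \min_{i \in E_{j,m}(t)} N_{i,m}(t)$, this means there exists some sponsor $i \in E_{j,m}(t)$ with $N_{i,m}(t) \le \frac{8 c^2_{m,t}(\ln d + 2\ln N_j(t))}{\Delta_{j,m}^2}$. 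The right-hand side is exactly the quantity appearing on the RHS of \eqref{eqofWrongArmCond}, so the task reduces to showing that such a sponsor $i$ can be taken inside $E_{j,m} \cap A_+ \cap A_m$ and that for that $i$ we have $N_i(t) - \big(\sum_{n \ne m} \frac{16}{\Delta_{i,n}^2}\big)\ln N_i(t) \le N_{i,m}(t)$.

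The second inequality is the crux and comes again from Lemma \ref{Pre_WrongArmCond} applied to user $i$: every time sponsor $i$ pulled a non-optimal arm $n \ne m$, we must have had $N_{i,n}(s) \le \frac{16 \ln N_i(s)}{\Delta_{i,n}^2}$ at that moment; since $N_i(s) \le N_i(t)$ and the bound is increasing in $N_i$, the total number of pulls $i$ makes on all arms $n \ne m$ up to time $t$ is at most $\sum_{n \ne m} \frac{16 \ln N_i(t)}{\Delta_{i,n}^2} = \big(\sum_{n \ne m}\frac{16}{\Delta_{i,n}^2}\big)\ln N_i(t)$. The remaining pulls of $i$ are on arm $m$, so $N_i(t) - N_{i,m}(t) \le \big(\sum_{n \ne m}\frac{16}{\Delta_{i,n}^2}\big)\ln N_i(t)$, which rearranges to $N_i(t) - \big(\sum_{n \ne m}\frac{16}{\Delta_{i,n}^2}\big)\ln N_i(t) \le N_{i,m}(t)$. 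Chaining this with the bound on $N_{i,m}(t)$ from the previous paragraph gives, for that particular sponsor $i$, $N_i(t) - \big(\sum_{n \ne m}\frac{16}{\Delta_{i,n}^2}\big)\ln N_i(t) \le \frac{8 c^2_{m,t}(\ln d + 2\ln N_j(t))}{\Delta_{j,m}^2}$, and since the left side of \eqref{eqofWrongArmCond} is a minimum over a set containing (or chosen to contain) this $i$, the inequality follows.

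The main obstacle — and the step I would need to argue most carefully — is the set-membership bookkeeping: the sponsor set $E_{j,m}(t)$ is defined as a $d$-subset of the top-$(d{+}1)$ users by $N_{i,m}(t)$, which a priori need not lie inside $A_+ \cap A_m$, whereas the conclusion \eqref{eqofWrongArmCond} quantifies over $E_{j,m} \cap A_+ \cap A_m$ (the fixed arrival-rate set, not the algorithm's running choice). I would reconcile these by invoking Assumption \ref{ass: Arrivalcondition}, which guarantees $|E_{j,m} \cap A_+ \cap A_m| \ge d$, and arguing that $E_{j,m}(t)$ can be selected to be exactly a $d$-subset of $E_{j,m} \cap A_+ \cap A_m$ (or that the top-$d$ users on arm $m$ are necessarily the heavily-sampled ones in $A_m$, whose counterfactual weights are well-behaved). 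A subtler point is that $c_{m,t}$ depends on the chosen $E_{j,m}(t)$; I would note that the statement is written for whatever consistent choice the algorithm makes, so the $c_{m,t}$ on both sides refers to the same set and no uniformity over choices is needed. Modulo this bookkeeping, the argument is a direct two-fold application of Lemma \ref{Pre_WrongArmCond}.
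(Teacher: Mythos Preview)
Your two-fold application of Lemma~\ref{Pre_WrongArmCond} is exactly the paper's structure, and you have correctly isolated the only non-routine step: the sponsor $i$ must lie in $A_m$, because the lower bound $N_{i,m}(t)\ge N_i(t)-\big(\sum_{n\ne m}16/\Delta_{i,n}^2\big)\ln N_i(t)$ requires every arm $n\ne m$ to be suboptimal for $i$ (otherwise some $\Delta_{i,n}=0$ and the bound is vacuous).

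Where your proposal diverges from the paper is in how this is resolved. You propose to pick $i$ as the minimizer inside the algorithm's running set $E_{j,m}(t)$ and then force $E_{j,m}(t)\subseteq E_{j,m}\cap A_+\cap A_m$, either by choice or by arguing the top-$d$ pullers must be in $A_m$. Neither is justified: the algorithm selects $E_{j,m}(t)$ purely from the top $d{+}1$ pullers of arm $m$ in $A_+$, with no reference to $A_m$, and nothing prevents some of those top pullers from having a different optimal arm.

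The paper reverses the direction of the comparison and thereby avoids needing $i\in E_{j,m}(t)$ at all. It first proves the lower bound $N_{i,m}(t)\ge N_i(t)-\big(\sum_{n\ne m}16/\Delta_{i,n}^2\big)\ln N_i(t)$ for \emph{every} $i\in A_m$. Then it uses the top-$d$ selection rule defining $E_{j,m}(t)$ the other way around: since $N_{j,m}^{\min}(t)$ is (roughly) the $d$-th largest value of $N_{\cdot,m}(t)$ over $A_+$, and since Assumption~\ref{ass: Arrivalcondition} supplies at least $d$ users in $E_{j,m}\cap A_+\cap A_m$, not all of them can have $N_{i,m}(t)>N_{j,m}^{\min}(t)$; hence $N_{j,m}^{\min}(t)\ge N_{i,m}(t)$ for some $i\in E_{j,m}\cap A_+\cap A_m$. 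Chaining gives $N_{j,m}^{\min}(t)\ge \min_{i\in E_{j,m}\cap A_+\cap A_m}\{\,\cdot\,\}$, and combining with the upper bound on $N_{j,m}^{\min}(t)$ from Lemma~\ref{Pre_WrongArmCond} yields \eqref{eqofWrongArmCond}. The key conceptual point you are missing is that the comparison between $N_{j,m}^{\min}(t)$ and $N_{i,m}(t)$ comes from the top-$d$ property, not from set containment.
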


See Section \ref{proofs} for their proofs. Lemma \ref{lemma:WrongArmCond} says that the inequality (\ref{eqofWrongArmCond}) is a necessary condition for a user $j$ to pull a non-optimal arm $m$.
Since LHS of growing like $N_i(t)$ increases far faster than the RHS of (\ref{eqofWrongArmCond}) growing only like $\ln N_j(t)$, the inequality will soon cease to hold for all non-optimal arms unless there exists a user $i$ in $E_{j,m}\cap A_+ \cap A_m$ with $N_i(t)$ that increases far slower than $N_j(t)$. Since there is no such $i\in E_{j,m}\cap A_+ \cap A_m$ (Assumption \ref{ass: Arrivalcondition}), only the optimal arm will be pulled afterwards except when the true mean $\mu_{i,n}$ is not included $CI^{se}_{i,n}(t)\cap {CI}^{cf}_{i,n}(t)$. A concentration inequality however assures the inclusion of the means in the confidence intervals with high probability. 

The following Proposition \ref{prop:BoundedRegret} formalizes the intuition provided above, showing that Assumptions \ref{assumption: SCO} and \ref{ass: Arrivalcondition} are indeed enough to achieve bounded expected regret. For the proof, refer to the Appendix \ref{proofs}.

\begin{proposition}\label{prop:BoundedRegret} Under Assumptions \ref{assumption: SCO} and \ref{ass: Arrivalcondition}, $E[Regret_{j}(T)]\le \sum_{m\in M} \Delta_{j,m} \left(\frac{\pi^2|A||M|}{6}+ \frac{k\pi^2|A|}{6}\right)$ for $j\in A_+$. Since this bound does not increase with $T$, the system expected regret $\sum_{j\in A}E[Regret_{j}(T)]$ is bounded. 
\end{proposition}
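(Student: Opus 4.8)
The plan is to bound the expected regret of user $j\in A_+$ by splitting the event "user $j$ pulls a non-optimal arm $m$ at some arrival time $t$" into two cases: (a) the case where some confidence interval fails to contain its true mean, i.e., there exist $i\in A$, $n\in M$ with $\mu_{i,n}\notin CI^{se}_{i,n}(t)\cap CI^{cf}_{i,n}(t)$; and (b) the complementary case where all confidence intervals are valid. For case~(b), Lemma~\ref{lemma:WrongArmCond} tells us that a necessary condition for the bad pull is inequality~(\ref{eqofWrongArmCond}), namely $\min_{i\in E_{j,m}\cap A_+\cap A_m}\{N_i(t)-(\sum_{n\neq m}16/\Delta_{i,n}^2)\ln N_i(t)\}\le 8c_{m,t}^2(\ln d+2\ln N_j(t))/\Delta_{j,m}^2$. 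I would show that, under Assumption~\ref{ass: Arrivalcondition}, the probability (over the arrival process and the noise) that this inequality holds at the $n$-th arrival of user $j$ is summable over $n$ — in fact $O(1/n^2)$ — because the definition of $E_{j,m}$ is precisely tailored so that every $i$ in that set has $N_i(t)$ growing fast enough (faster than $q_{ij,m}(N_j(t))$, which dominates $\ln N_j(t)$) that $\int_0^\infty P(N_i(t)<q_{ij,m}(N_j(t)))\,dF_j^{(n)}(t)=O(1/n^2)$; a union bound over the $d$ users in $E_{j,m}\cap A_+\cap A_m$ keeps this $O(1/n^2)$. Summing over $n$ via $\sum_n 1/n^2=\pi^2/6$ and over the $|M|$ arms and $|A|$ users gives the first term $\frac{\pi^2|A||M|}{6}$ after multiplying by $\Delta_{j,m}$.

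For case~(a), I would union-bound the probability that any confidence interval is invalid. By Lemma~\ref{leamma:origin_CI}, $P(\mu_{i,n}\notin CI^{se}_{i,n}(t))\le N_i(t)^{-2}$, and by Lemma~\ref{lemma:CF_CI}, $P(\mu_{i,n}\notin CI^{cf}_{i,n}(t))\le N_i(t)^{-2}$, both holding because the chosen widths $w^{se}$ and $w^{cf}$ are exactly the thresholds appearing in those lemmas. The subtlety is that we need these failures to be summable when indexed by the arrivals relevant to user $j$'s regret; I would handle this by charging a confidence-interval failure to the arrival of $j$ during which it could cause a bad pull, and noting that at $j$'s $n$-th arrival, $N_j(t)\ge n$, so the relevant failure probabilities (for the $CI$s of user $j$ itself and, via the $E_{j,m}$ structure, the other users) sum like $\sum_n n^{-2}$. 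Here the dimension $k$ enters: the counterfactual CI for $j$ on arm $m$ aggregates $d\le k$ (or rather, up to $k$, matching $\dim\phi$) other users' self-experience CIs, so the number of CIs that must simultaneously be valid scales with $k$ per arm, yielding the second term $\frac{k\pi^2|A|}{6}$ after summing over users and multiplying by $\Delta_{j,m}$. Combining (a) and (b), $E[Regret_j(T)]=\sum_{m}\Delta_{j,m}\,E[N_{j,m}(T)\mathbf{1}\{m\neq m_j^*\}]\le \sum_{m\in M}\Delta_{j,m}\big(\frac{\pi^2|A||M|}{6}+\frac{k\pi^2|A|}{6}\big)$, uniformly in $T$, and summing over $j\in A$ gives bounded system regret.

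The main obstacle I anticipate is case~(b): carefully converting the "$\limsup$ of a ratio is finite" statement in Assumption~\ref{ass: Arrivalcondition} into a genuinely summable bound on the per-arrival bad-pull probability. The issue is that Assumption~\ref{ass: Arrivalcondition} only controls $\int_0^\infty P(N_i(t)<q_{ij,m}(N_j(t)))\,dF_j^{(n)}(t)$, which is the probability that user $i$ lags behind a specific sub-linear function $q_{ij,m}$ of user $j$'s count; I need to argue that whenever inequality~(\ref{eqofWrongArmCond}) holds, necessarily $N_i(t)<q_{ij,m}(N_j(t))$ for the minimizing $i$ — this is where the precise functional form of $q_{ij,m}$ (built from the Lambert-$W$ function so that $q$ solves $q - B\ln q = C\ln(N_j/d)$-type relations, matching the LHS/RHS of~(\ref{eqofWrongArmCond})) does the work. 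I would also need a uniform-in-$t$ lower bound tying $N_j(t)$ at the $n$-th arrival to $n$ (trivially $N_j(S_j(n))\ge n$) to push the $F_j^{(n)}$-integral bound through. Once that implication is established, the $O(1/n^2)$ decay and the $\pi^2/6$ summation are routine, as is the union bound over the finitely many users and arms.
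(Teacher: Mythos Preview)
Your approach is the paper's: split on whether all confidence intervals are valid, apply Lemma~\ref{lemma:WrongArmCond} on the valid-CI event, convert inequality~(\ref{eqofWrongArmCond}) into $N_i(t)<q_{ij,m}(N_j(t))$ via the Lambert-$W$ definition of $q_{ij,m}$, and invoke Assumption~\ref{ass: Arrivalcondition} for $O(1/n^2)$ summability. The architecture is right and your identification of the ``main obstacle'' (that~(\ref{eqofWrongArmCond}) implies $N_i(t)<q_{ij,m}(N_j(t))$ for the minimizing $i$) is exactly what the paper uses.

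However, your accounting of the two constants is swapped and your reading of $k$ is off. In the paper's proof, case~(a) (some CI fails) is what produces $\pi^2|A||M|/6$: a union bound over all $(i,n)\in A\times M$, each CI-violation having probability at most $N_j(S_j(n))^{-2}\le n^{-2}$ by Lemmas~\ref{leamma:origin_CI} and~\ref{lemma:CF_CI}, summed over $n$. Case~(b) (all CIs valid) is what produces $k\pi^2|A|/6$: union-bound over $i\in E_{j,m}\cap A_+\cap A_m$ (cardinality $\le|A|$), and the finite limsup in Assumption~\ref{ass: Arrivalcondition} gives $\int_0^\infty P(N_i(t)<q_{ij,m}(N_j(t)))\,dF_j^{(n)}(t)\le k/n^2$ for a constant $k$. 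That $k$ is \emph{not} the context dimension; the paper is reusing the letter for the implicit constant coming out of the $\limsup$ in Assumption~\ref{ass: Arrivalcondition}, which is what misled you. Relatedly, in your case~(b) you double-count an $|M|$ factor: the sum over arms is already carried by the outer $\sum_{m\in M}\Delta_{j,m}$, so inside the parenthesis only the union over $|A|$ users (times the limsup constant) appears.
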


\section{Incentive to opt in and comply} Suppose that a user $i\in A$ only cares about the asymptotic order of the regret, and not its precise value. That is, user $i$ is indifferent between an $f(T)$ regret and an $g(T)$ regret if and only if $f(T) = \Theta (g(T))$. We say that such a user $i$ has \textit{asymptotic preference} (defined formally in Appendix \ref{sec:Detailincentive}). Would there be any incentive for the user $i$ to opt out or not follow the recommendation at any time? This is a dynamic game, and the question 
relates to whether opting in and following the recommendation constitutes a Subgame Perfect Nash Equilibrium (SPNE) \cite{fudenberg1991game}. If all the users of $A$ have asymptotically indifferent preferences, it is trivial that no user can strictly improve herself by opting out or not complying to recommendation since she already has $O(1)$ regret and there is no smaller order of regret that can be contemplated. Hence we have the following result:
\begin{theorem}\label{thm:SPNE}

Under Assumption \ref{ass: Arrivalcondition}, for users with asymptotically indifferent preferences, the strategy where every user opts in and complies is a Subgame Perfect Nash Equilibrium (SPNE).

\end{theorem}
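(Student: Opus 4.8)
The plan is to verify the definition of SPNE directly for the profile $\sigma^{*}$ in which every user opts in and always complies. In the natural extensive-form representation of this repeated recommendation game, $\sigma^{*}$ is an SPNE precisely when, for every subgame $G$, the restriction $\sigma^{*}|_{G}$ is a Nash equilibrium of $G$, i.e.\ no single user $j$ has a continuation strategy $\sigma_{j}'$ that strictly increases her asymptotic payoff while the remaining users follow $\sigma^{*}_{-j}|_{G}$. I would first note that any subgame $G$ is reached after a finite history $h$ of arrivals and actions, and that under $\sigma^{*}|_{G}$ all users opt in and comply from $h$ onward, so the recommender runs exactly the CFUCB algorithm on $A_{+}=A$ in the continuation. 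The two structural hypotheses driving the bounded-regret analysis are preserved along any such $h$: Assumption~\ref{assumption: SCO} is a property of the environment, quantified over admissible $E$ and $j$, and is history-independent; Assumption~\ref{ass: Arrivalcondition} is phrased through $\limsup$s of tail quantities of the arrival processes and so is unaffected by deleting a finite prefix $h$. Hence Proposition~\ref{prop:BoundedRegret} applies to the continuation, giving each user $j$ post-$h$ regret of order $O(1)$; since $h$ is finite it adds only a bounded constant, so every user's total regret under $\sigma^{*}|_{G}$ is $\Theta(1)$ (bounded).

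Next I would show that no deviation in $G$ strictly helps. Fix a user $j$ and a continuation strategy $\sigma_{j}'$, with the others playing $\sigma^{*}_{-j}|_{G}$. For $\sigma_{j}'$ to make $j$ strictly better off, her continuation regret would have to be of strictly smaller order than the $\Theta(1)$ she already attains under $\sigma^{*}|_{G}$. But $Regret_{j}(T)=\sum_{n\le N_{j}(T)}\Delta_{j,m_{j}(n)}$ is a sum of non-negative terms, hence non-negative and non-decreasing in $T$ along every sample path and for every strategy; a non-negative, non-decreasing function of strictly smaller order than a bounded positive function must be $o(1)$, and a non-negative, non-decreasing $o(1)$ function is identically zero. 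Thus a strict improvement would require $j$ to pull $m_{j}^{*}$ at every arrival with probability one, which is impossible: no finite history pins down the reward parameters $\{\mu_{j,m}\}$, so at $j$'s first arrival inside $G$ the pulled arm is sub-optimal with strictly positive probability, whence $E[Regret_{j}(T)]$ under $\sigma_{j}'$ is bounded away from $0$ for all large $T$ and is not $o(1)$. Therefore under any $\sigma_{j}'$ user $j$'s regret is either $\Theta(1)$—the same order she enjoys under $\sigma^{*}$—or of strictly larger order; by the definition of asymptotic (indifference) preference she is indifferent in the first case and strictly worse off in the second, so she never strictly gains. Since $j$ and $\sigma_{j}'$ were arbitrary, $\sigma^{*}_{j}|_{G}$ is a best response, $\sigma^{*}|_{G}$ is a Nash equilibrium of $G$, and, $G$ being an arbitrary subgame, $\sigma^{*}$ is an SPNE.

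The step I expect to be the main obstacle is transferring the bounded-regret guarantee to off-equilibrium subgames. After a history $h$ containing non-compliant pulls, the recommender's stored counts $N_{i,m}$ and empirical means $\overline{Y}_{i,m}$ reflect those pulls, so one must check that the proofs of Lemmas~\ref{Pre_WrongArmCond}--\ref{lemma:WrongArmCond} and Proposition~\ref{prop:BoundedRegret} use only that the stored data are correctly labelled i.i.d.\ samples from the arms actually pulled—which holds, since for an opted-in user the true pulled arm and reward are observed—and that any finite perturbation of the counts changes the regret bound by at most an additive constant; together with the preservation of Assumptions~\ref{assumption: SCO} and \ref{ass: Arrivalcondition} noted above, this closes the gap. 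A secondary subtlety, needed only to exclude a deviator ``free-riding'' to $o(1)$ regret, is the claim that no strategy can force identically zero regret; I would make this explicit via the non-identifiability argument above rather than fold it into the informal word ``trivial.''
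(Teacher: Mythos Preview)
Your argument is correct and rests on the same single observation the paper uses: under $\sigma$ every user's expected regret is $O(1)$ by Proposition~\ref{prop:BoundedRegret}, and an asymptotically indifferent user cannot strictly prefer any deviation because no strictly smaller order exists. The paper's proof is literally that sentence---it cites Corollary~\ref{Gameresult} and declares the result ``immediate,'' folding the impossibility of a strictly-smaller-than-$O(1)$ order into the phrase ``there is no smaller order of regret that can be contemplated.'' You do substantially more: you check the best-response condition in every subgame (including those reached through off-equilibrium histories), you argue that Assumptions~\ref{assumption: SCO} and~\ref{ass: Arrivalcondition} and the bounded-regret machinery survive a finite prefix, and you explicitly exclude the $o(1)$/zero-regret edge case via a non-identifiability argument. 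None of that appears in the paper; your version is the same route, but with the gaps the paper elides actually filled in.
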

The formal formulation of this game and result are provided in the Appendix \ref{sec:Detailincentive}.
\begin{remark}
    While Theorem \ref{thm:SPNE} posits our result as SPNE, our result is actually much more robust than SPNE. This is because we allow coalitional deviation as long as Assumption \ref{ass: Arrivalcondition} holds, i.e., $|E_{j,m}\cap A_+ \cap A_m|\ge d$ for all $j\in A_+$ and $m\in M$.
\end{remark}

\section{Simple simulation analysis}\label{sec:simulation}

\begin{figure}[ht]
\includegraphics[width=7cm]{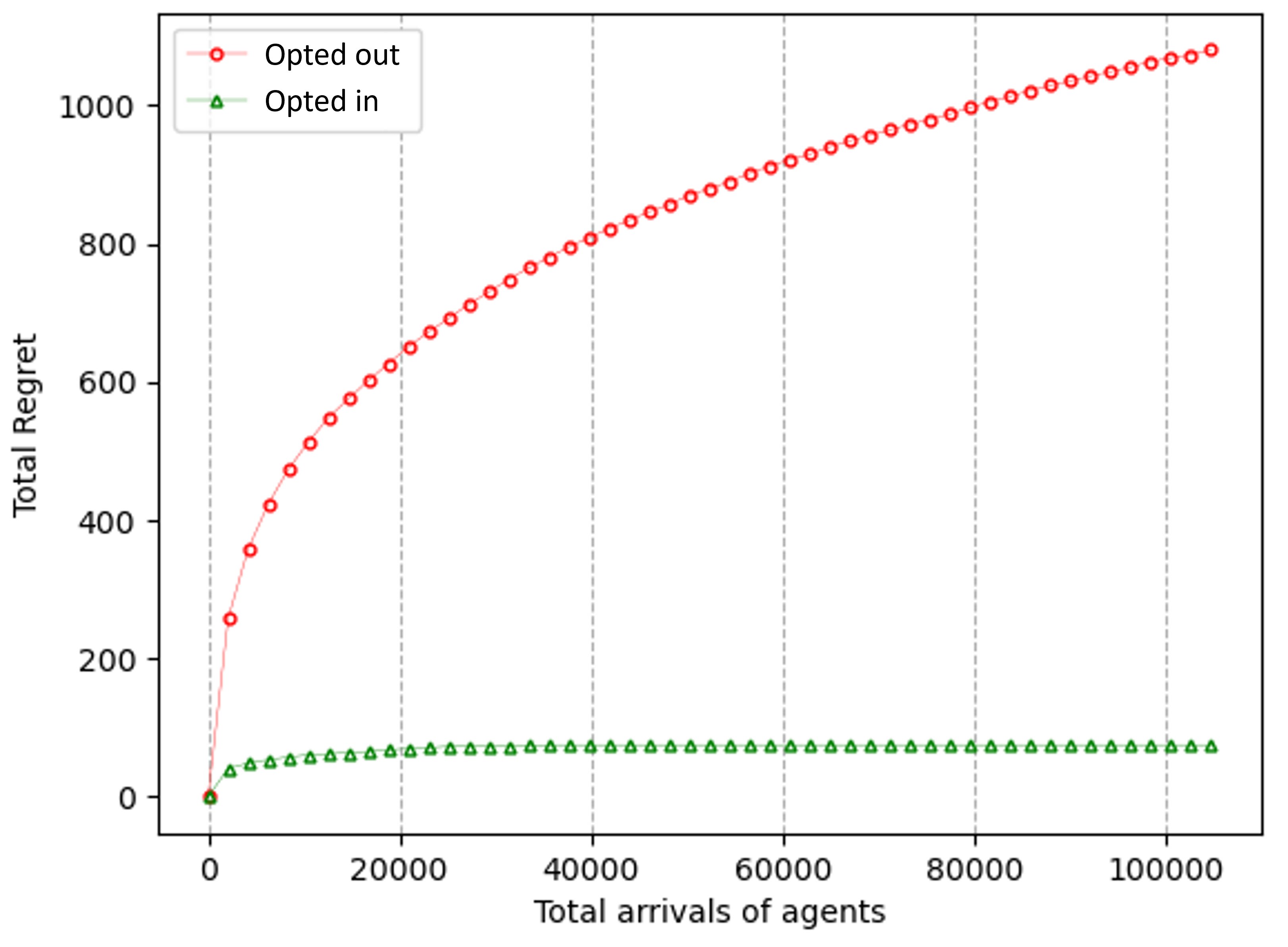}
\centering
\caption{The regret of the opted in users compared to that opted out users (200 users, 20 arms, feature vector dimension 5)} 
\label{CFUCB_UCB_Comparison}
\end{figure}

As we are suggesting a new practical setting that relaxes the 'knowledge of $\phi$' assumption (Assumption \ref{assumption: SCO}), our empirical simulation analysis can simply be devoted to verifying the theoretical results thus far. Specifically, we aim for the empirical demonstration of the opted-in users' $O(1)$ expected regret and the opted-out users' $O(\ln T)$ regret. Our SCM oracle  computes the coefficients using the user feature vectors. Our algorithm only knows about the coefficients.

In this experiment, we have 200 users repeatedly arriving to explore 20 arms. Each user independently arrives according to its own renewal process with positively truncated $i.i.d.$ Gaussian inter-arrival times. Both user and arm feature vectors (unknown) are randomly and uniformly generated as vectors on the surface of the $0$-centered unit sphere in $\mathbb{R}^5$. Rewards are generated according to the simplified disjoint model (Section \ref{sec:ourmodel}), i.e., the reward resulting from an arm pull is the inner product of the user's and the pulled arm's feature vectors plus i.i.d. $N(0,0.1)$ noise.

Figure \ref{CFUCB_UCB_Comparison} averages the results of ten experiments, with arrivals and feature vectors newly generated for each experiment. As can be seen, the regret graph for the opted-in users almost levels off by the time each user pulls each arm five times on average. In contrast, the average regret of the opted-out users grows logarithmically with $T$.

\section{Conclusions}\label{sec:Conclusion}

In this paper, we present a theoretical study addressing the challenges in applying recent bounded regret results \cite{hao2020adaptive, wu2020stochastic, papini2021leveraging} to practical recommender systems. These challenges encompass unobservable covariates, unknown linear representation functions, user arrival rates, and incentives to opt in. We present an algorithm that relies on a more practical assumption than the knowledge of linear representation functions, while still enabling bounded regret. This algorithm also allows other practical relaxations, including allowing very different orders of arrival rates among users. 

\section{Related works}\label{sec:related}
The issue of not knowing $\phi$ in linear contextual bandits has been studied in the representation learning literature. Recent studies \cite{tirinzoni2022scalable, tirinzoni2023complexity} examined the linear contextual bandit representation selection problem, i.e., learning to choose a good representation $\phi$ from a finite set of known representations $\Phi$. However, it remains challenging for practical recommender systems applications to assume the knowledge of $\Phi$. In \cite{du2020few}, they study $\phi$ learning problem beyond representation selection; however, their setting and results are not directly related to online learning settings. 

At the intersection of Synthetic Control Methods (SCM) and bandit methods, recent studies \cite{farias2022synthetically, agarwal2020synthetic, chen2023synthetic} have attempted to develop online learning versions of SCM. Compared to these works, our focus is not on developing a good SCM method itself, but on assuming existence of a good SCM method.  To the best of our knowledge, this work is the first to observe that what is achieved by SCM is a relaxation of what is assumed in the linear contextual bandit models. 

On the subject of incentive issues, there are many works on incentive constraints in coordinating exploration. \cite{chen2018incentivizing} studies Bayesian perspectives of incentivizing myopic users with a private context to explore, with the goal of achieving $O\left(\log ^{3}(T)\right)$ regret. 
\cite{immorlica2019bayesian} considers incentive-compatible exploration coordination in a setting opposite to ours: the context is private, but the mean reward associated with each arm is known. In this work, we illustrate that opting in (revelation of private information) is Subgame Perfect Nash Equilibrium (SPNE) and achieve $O(1)$ regret. 

\printbibliography

\newpage
\onecolumn

\section{Proofs}\label{proofs}

\begin{proof}[Proof of Lemma \ref{lemma:subgaussianarrivals}]
Let $P(N_i(t)< q_{ij,m}(N_j(t))) = h_{j,m}(t)$. Then 
\begin{eqnarray}
\hspace*{-0.5cm}\int h_{j,m}F_{S_j(n) } \notag
 \\
 &&\hspace*{-2.0cm}= \int_{[0, n(\theta^j-\epsilon))} h_{j,m}F_{S_j(n) }+\int_{[n(\theta^j-\epsilon), \infty)}  h_{j,m}F_{S_j(n) } \notag
\\&&\hspace*{-2.0cm}\le h_{j,m}(0^+)\times e^{-2n\epsilon^2}  + h_{j,m}(n(\theta^j-\epsilon))\times 1 \label{eq:14}
\\&&\hspace*{-2.0cm}=2|A|e ^{-2n\epsilon^2}  + h_m^{(j)}(  n(\theta^j-\epsilon))  \notag
\\&&\hspace*{-2.0cm}=2|A|(\exp ({-2n\epsilon^2} ) + \exp(-2\frac{(n(\theta^j-\epsilon)  -\lceil q_{ij,m}( \frac{n(\theta^j-\epsilon)  }{\theta^{j}-\epsilon^j}    )\rceil\theta_{\max})^2}{\lceil q_{ij,m}( \frac{n(\theta^j-\epsilon)  }{\theta^{j}-\epsilon^j}    )\rceil}) \notag
\\&&\hspace*{-1.0cm} + \exp(-2\frac{{\epsilon^j}^2}{\theta^j-\epsilon^j}n(\theta^j-\epsilon))) \label{eq:99}
\\&&\hspace*{-2.0cm}= 2|A|\left(2\exp ({-2n\epsilon^2} ) + \exp(-2\frac{(n(\theta^j-\epsilon)  -\lceil q_{ij,m}( n  )\rceil\theta_{\max})^2}{\lceil q_{ij,m}(n   )\rceil})\right) \notag
\\&&\hspace*{-1.0cm}(\mbox{for simplicity, we fix }\epsilon^j=\epsilon)\notag
\\&&\hspace*{-2.0cm}=O (\frac{1}{n^2}). \label{eq:O}
\end{eqnarray}
Above, \\
(\ref{eq:14}) holds because 
$P(S_n^{(j)}\le n(\theta^j-\epsilon))\le e^{-2n\epsilon^2} \mbox{ and since }g_m^{(j)} \mbox{ is a decreasing function}$,
\\
(\ref{eq:99}) holds because of Lemma \ref{lemma:middle1} below,
\\
(\ref{eq:O}) holds because
\begin{eqnarray}
&&\hspace*{-2.5cm} \left(n(\theta^j-\epsilon))  -\lceil q_{ij,m}(n )\rceil \theta_{\max}\right)^2 \ge \lceil q_{ij,m}(n)\rceil ^2 \mbox{ for all }n\ge N \mbox{ for some }N \label{eq:43}\\
&&\hspace*{-1.5cm}(\Rightarrow) (n(\theta^j-\epsilon))  -\lceil q_{ij,m}(n )\rceil \theta_{\max})^2 \ge \ln (n) \lceil q_{ij,m}(n )\rceil \mbox{ for all  }n\ge N
\label{eq:44}
\\
&&\hspace*{-1.5cm}(\Rightarrow) \exp(-2\frac{(n(\theta^j-\epsilon)  -\lceil q_{ij,m}(n  )\rceil\theta_{\max})^2}{\lceil q_{ij,m}(n  )\rceil}) = O(\frac{1}{n^2}), \notag    
\end{eqnarray}

\end{proof}

\begin{lemma}\label{lemma:middle1}
   $h_{j,m}(t):=P(N^{(i)}(t)< q_{ij,m}(N^{(j)}(t)))\le |A| \left( \exp(-2\frac{(t-\lceil q_{ij,m}( \frac{t}{\theta^{j}-\epsilon^j}   )\rceil \theta_{\max})^2}{\lceil q_{ij,m}( \frac{t}{\theta^{j}-\epsilon^j}  )\rceil }) +  \exp({-2\frac{{\epsilon^j}^2}{\theta^j-\epsilon^j}t   })\right)$.
\end{lemma}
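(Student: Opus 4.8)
The plan is to decouple the event $\{N^{(i)}(t)<q_{ij,m}(N^{(j)}(t))\}$ according to whether user $j$ has arrived atypically fast by time $t$. Fix the threshold $n^\star:=t/(\theta^j-\epsilon^j)$ and set $\ell:=\lceil q_{ij,m}(n^\star)\rceil$. Since $q_{ij,m}$ is nondecreasing (from its explicit Lambert-$W$ form; Appendix \ref{sec:qij}), on $\{N^{(j)}(t)\le n^\star\}$ we have $q_{ij,m}(N^{(j)}(t))\le q_{ij,m}(n^\star)\le\ell$, so
\[
h_{j,m}(t)\;\le\;P\!\left(N^{(j)}(t)>n^\star\right)\;+\;P\!\left(N^{(i)}(t)<\ell\right).
\]
I would then bound each term by Hoeffding's inequality applied to partial sums of inter-arrival times, using that a $1$-sub-Gaussian inter-arrival time satisfies the Hoeffding moment-generating-function bound.

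For the first term (fast arrivals of $j$), note $\{N^{(j)}(t)>n^\star\}=\{S^{(j)}(\lceil n^\star\rceil)\le t\}$, where $S^{(j)}(\lceil n^\star\rceil)$ is a sum of $\lceil n^\star\rceil$ i.i.d. inter-arrival times with mean at least $\theta^j n^\star=t+\tfrac{\epsilon^j}{\theta^j-\epsilon^j}t$; this is a lower-tail deviation of size $\ge\tfrac{\epsilon^j}{\theta^j-\epsilon^j}t$ about a mean of order $n^\star$, and Hoeffding's bound yields $\exp\!\big(-2\tfrac{(\epsilon^j)^2}{\theta^j-\epsilon^j}t\big)$ after simplifying $\big(\tfrac{\epsilon^j}{\theta^j-\epsilon^j}t\big)^2/n^\star$. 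For the second term (slow arrivals of $i$), $\{N^{(i)}(t)<\ell\}=\{S^{(i)}(\ell)>t\}$, where $S^{(i)}(\ell)$ is a sum of $\ell$ i.i.d. inter-arrival times with mean $\ell\theta_i\le\ell\theta_{\max}$; when $t>\ell\theta_{\max}$, Hoeffding's upper-tail bound gives $\exp\!\big(-2\tfrac{(t-\ell\theta_{\max})^2}{\ell}\big)$. Summing the two estimates and using the prefactor $|A|$ to absorb (i) the bounded transient regime $t\le\ell\theta_{\max}$, in which the second tail is vacuous and one simply bounds $h_{j,m}(t)\le 1$, and (ii) any union bound over the (at most $|A|$) candidate users at the point where the lemma is invoked, produces the claimed inequality. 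The sublinearity of $q_{ij,m}$ (Appendix \ref{sec:qij}) guarantees $\ell=o(t)$, so this transient regime is finite.

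The main obstacle is bookkeeping rather than conceptual: aligning the ceilings $\lceil n^\star\rceil$ and $\ell=\lceil q_{ij,m}(n^\star)\rceil$ with the displayed mean shifts and deviation sizes, and matching the constant $2$ in the exponents, which depends on the precise sub-Gaussian/Hoeffding normalization assumed for the inter-arrival times. One also must confirm the second tail is non-vacuous for all large $t$ — equivalently, that $q_{ij,m}$ grows strictly slower than linearly — since this is exactly what makes the downstream $O(1/n^2)$ estimate in the proof of Lemma \ref{lemma:subgaussianarrivals} go through.
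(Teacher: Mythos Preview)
Your approach is correct and essentially the same as the paper's: the paper also splits at the threshold $n^\star=t/(\theta^j-\epsilon^j)$, uses monotonicity of $q_{ij,m}$ to reduce the first piece to $P(S^{(i)}_{\lceil q_{ij,m}(n^\star)\rceil}>t)$, bounds the second piece by $P(N^{(j)}(t)\ge n^\star)$, and then applies Hoeffding to each tail. The only cosmetic difference is that the paper writes the split as an integral against $dF_{N^{(j)}(t)}$ rather than as the event union you use, and it inserts the $|A|$ factor at the very end when passing from $\theta_i$ to $\theta_{\max}$ without further comment; your interpretation of that factor as slack covering the transient regime is reasonable.
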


\begin{proof}[Proof of Lemma \ref{lemma:middle1}]
     \begin{eqnarray}
&&  h_{j,m}(t):=P(N^{(i)}(t)< q_{ij,m}(N^{(j)}(t))) \notag
\\
&&\ \ \ \ \ \ =   \int P( N^{(i)}(t) < q_{ij,m}(n)) \; dF_{N^{(j)}(t)}(n)\notag
\\
&&\ \ \ \ \ \ \le   \int P( N^{(i)}(t) < q_{ij,m}(n)) \; dF_{N^{(j)}(t)}(n)\notag
\\
&&\ \ \ \ \ \ =   \int P(  S^{(i)}_{\lceil q_{ij,m}(n)\rceil)   } > t)   \; dF_{N^{(j)}(t)}(n)\notag
\\
&&\ \ \ \ \ \ = (\int_{[0, \frac{t}{\theta^j-\epsilon^j}]} P(  S^{(i)}_{\lceil q_{ij,m}(n)\rceil)   } > t)   \; dF_{N^{(j)}(t)}(n)\notag
\\
&&\ \ \ \ \ \ \ \ \ \ \ \ \ \ \ \ \ \ \ \ \;+\int_{(\frac{t}{\theta^j-\epsilon^j}, \infty)} P(  S^{(i)}_{\lceil q_{ij,m}(n)\rceil)   } > t)   \; dF_{N^{(j)}(t)}(n))\notag
\\
&&\ \ \ \ \ \  \stackrel{(c)}{\le}   \left( P(  S^{(i)}_{\lceil q_{ij,m}( \frac{t}{\theta^{j}-\epsilon^j}    )\rceil)   } > t)\times 1 +1 \times \exp({-2\frac{{\epsilon^j}^2}{\theta^j-\epsilon^j}t   })\right) \label{31}
\\
&&\ \ \ \ \ \ =  \left( \exp(-2\frac{(t-\lceil q_{ij,m}( \frac{t}{\theta^{j}-\epsilon^j}   )\rceil \theta_i)^2}{\lceil q_{ij,m}( \frac{t}{\theta^{j}-\epsilon^j}  )\rceil }) +  \exp({-2\frac{{\epsilon^j}^2}{\theta^j-\epsilon^j}t   })\right)\label{eq:34} \notag
\\
&&\ \ \ \ \ \ \le |A| \left( \exp(-2\frac{(t-\lceil q_{ij,m}( \frac{t}{\theta^{j}-\epsilon^j}   )\rceil \theta_{\max})^2}{\lceil q_{ij,m}( \frac{t}{\theta^{j}-\epsilon^j}  )\rceil }) +  \exp({-2\frac{{\epsilon^j}^2}{\theta^j-\epsilon^j}t   })\right).
\end{eqnarray}
Above, 
\begin{itemize}

    \item  $\theta_{\max} := \max_{i\in A} \theta_i.$
    \item The inequality (c) of (\ref{31}) holds because we apply left tail Hoeffding inequality, i.e.,  
\begin{eqnarray*}
P\left(S_{n}^{(j)}\leq n (\theta^j-\epsilon^j)\right)=P\left(N^{(j)}( n(\theta^j-\epsilon^j))\geq n\right)\leq e^{-2 n\epsilon^{j2} } \\
\Leftrightarrow P\left(N^{(j)}(t)\geq \frac{t}{\theta^j-\epsilon^j}\right)\leq e^{-2 \frac{\epsilon^{j2}   }{\theta^j-\epsilon^j}  t  }
\end{eqnarray*}
and 
$\lceil q_{ij,m}(n)\rceil$ is an increasing function of $n$.
\end{itemize}
\end{proof}

\begin{proof}[Proof of Theorem \ref{armrequirement}]
For simplicity, we denote $|A_+|=a$ and $|M|=b$. 
Let $I_m$ be the indicator random variable for the event $\{|A_m|<d+1\}$, and $I :=\sum_{m\in M} I_m$. What we want is to upper bound $P(I>0)$ by $\epsilon$.
Note that 
\begin{eqnarray}
&&\hspace*{-0.7cm} P(I>0) = P(I\ge 1) \notag
\\
&& \le  E[I] \ \ \ \mbox{ (because of Markov's inequality) }\notag
\\
&&  = bE[I_1] \notag\\\notag
&&  = bP(I_1=1) \\\notag
&&  = b \sum_{k=0}^{d} \binom{a}{k}\left(1-\frac{1}{b}\right)^{a-k}\left(\frac{1}{b}\right)^k
\\\notag
&& \le  b\sum_{k=0}^{d} \binom{a}{k}\left(1-\frac{1}{b}\right)^{a-d}\left(\frac{1}{b}\right)^k
\\\notag
&& \le  b \sum_{k=0}^{d} \frac{a^k}{k!}{\exp}(-\frac{a-d}{b})\left(\frac{1}{b}\right)^k \mbox{ (because } \binom{a}{k} \leq \frac{a^{k}}{k !} \mbox{, and } 1+x\le e^x)
\\
&& =  {\exp}\left(\frac{d}{b}\right)  \sum_{k=0}^{d} \frac{1}{k!} \left(\frac{a}{b}\right)^k{\exp}\left(-\frac{a}{b}\right) \notag
\\
&& =  b {\exp}\left(\frac{d}{b}\right)P\left(Z\le d\right) \mbox{, where }Z\sim Poi(\frac{a}{b}) \notag
\\
&& \stackrel{(a)}{\le}  b {\exp}\left(\frac{d}{b}\right)\exp\left(-\frac{1}{2}\frac{b}{a}\frac{\left(a-bd\right)^2}{b^2}\right)\label{eq:5}
\\
&&=b {\exp}\left(\frac{1}{b}\left(d-\frac{\left(a-bd\right)^2}{2a} \right)\right)\notag
\\
&&= {\exp}\left(\ln b-\frac{1}{b}\left(\frac{\left(a-bd\right)^2}{2a}-d \right)\right).\label{eq:6}
\end{eqnarray}
Above, the inequality $(a)$ of (\ref{eq:5}) holds because $Z\sim Poisson(\lambda)$,  $\operatorname{Pr}[Z \leq \lambda-x] \leq e^{-\frac{x^{2}}{2 \lambda} }$ for $0\leq x\leq\lambda$, where in our case $\frac{a}{b}\ge d$ as assumed, $\lambda = \frac{a}{b}$, $\lambda-x  = d$ and $x=\frac{a}{b}-d=\frac{a-bd}{b}$).

Let us further assume that $a \ge (1+\eta) bd$. Now
\begin{eqnarray}
    &&\hspace*{-0.7cm} a \ge (1+\eta) bd \notag
    \\
    &&(\Leftrightarrow) \ \ \ (1+\eta)(a-bd) \ge (1+\eta)a-a = \eta a \notag
    \\
    &&(\Leftrightarrow) \ \ \ a \le (a-bd) \frac{(1+\eta)}{\eta}.\label{eq:7}
\end{eqnarray}
Then,
\begin{eqnarray}
    &&\hspace*{-0.7cm} P(I>0)\le\epsilon \notag
    \\
    &&(\Leftarrow) \ \ \ {\exp}\left(\ln b-\frac{1}{b}\left(\frac{\left(a-bd\right)^2}{2a}-d \right)\right) \le \epsilon \ \ \mbox{  (because of (\ref{eq:6}))}\notag
    \\
    &&(\Leftrightarrow) \ \ \  \exp\left(-\frac{(\frac{\left(a-bd\right)^2}{2a}-d )-b\ln b}{b}\right) \le \epsilon \notag
    \\
    &&(\Leftrightarrow) \ \ \  \frac{\left(a-bd\right)^2}{2a}\ge b\ln b+ b\ln\frac{1}{\epsilon}+d \notag
    \\
    && (\Leftarrow) \ \ \  a-bd \ge \frac{2(1+\eta)}{\eta}\left(b\ln b+ b\ln\frac{1}{\epsilon} +d\right) \ \ \mbox{  (because of (\ref{eq:7}))}
\end{eqnarray}
\end{proof}

\begin{proof}[Proof of Lemma \ref{leamma:origin_CI}.]
This follows from the 1-sub-Gaussian tail bound $P(|\overline{Y}_{j,m}(t) -\mu_{j,m}|>\epsilon)\le 2exp( {-N_{i,m}(t)\epsilon^2/2})$. 
Since we want to upper bound $P(|\overline{Y}_{j,m}(t) -\mu_{j,m}|>\epsilon)\le$ by $N_j(t)^{-2}$, the value of $\epsilon$ that renders 2$exp( {-N_{i,m}(t)\epsilon^2}/2) \le N_j(t)^{-2}$ will suffice. This yields $\epsilon \ge  \sqrt{\frac{4\ln N_j(t)}{ N_{j,m}(t)}}$.
\end{proof}

\begin{proof}[Proof of Lemma \ref{lemma:CF_CI}.]
\begin{eqnarray}
&&P(|\widehat{Y}_{j, m}(t) -\mu_{j,m}|>\epsilon) \notag\\
&&\ \ \ \ \ \ =  1-P(|\widehat{Y}_{j,m}(t)-\mu_{j,m}|\le\epsilon)\notag
\\
&& \ \ \ \ \ \ \le 1- \Pi_{i\in{E_{j,m}(t)}}P( |a_{ji}| |  {Y}_{i,m}(t) -\mu_{i,m}| \le  |a_i^{(j)}| \frac{  \epsilon}{c_{m,t}})\notag
\\
&& \ \ \ \ \ \ = 1- \Pi_{i\in E_{j,m}(t)}(1-P(|  {Y}_{i,m}(t)  -\mu_{i,m}|> \frac{\epsilon}{c_{m,t}}))\notag
\\
&& \ \ \ \ \ \ \le 1-   \Pi_{i\in E_{j,m}(t)}( (1- \exp( \frac{-N_{i,m}(t)\epsilon^2}{2c_{m,t}^2}))) \notag
\\
&& \ \ \ \ \ \ \le 
1- \Pi_{i\in E_{j,m}(t)} (1- \exp( \frac{-N_{j,m}^{\min}(t) \epsilon^2}{2c_{m,t}^2}) \; \; (\because N_{j,m}^{\min}(t) := \min_{i\in E_{j,m}(t)} N_{i,m}(t))\notag
\\
&& \ \ \ \ \ \ 
= 1-(1- \exp( \frac{- N_{j,m}^{\min}(t) \epsilon^2}{2c_{m,t}^2}) )^d \notag
\\
&& \ \ \ \ \ \ \le d \exp( \frac{-N_{j,m}^{\min}(t) \epsilon^2}{2c_{m,t}^2}). \notag
\end{eqnarray}

Therefore, $\epsilon\geq \sqrt{\frac{2\ln (d/\delta)}{N_{j,m}^{\min}(t)/c_{m,t}^2}}$ implies $P(|\widehat{Y}_{j,m}(t) -\mu_{j,m}|>\epsilon)\leq \delta$.

Since we want CI with $\delta = 1/N_j(t)^{-2}$ following the spirit of \cite{auer2002finite}, CI with width $\sqrt{\frac{2\ln d + 4\ln N_{j}(t)}{N_{j,m}^{\min}(t)/c_{m,t}^2}}$ works.

\end{proof}

\begin{proof}[Proof of Lemma \ref{Pre_WrongArmCond}]
Denote the optimal arm for user $j$ as arm $m^*_j$. 
According to Algorithm \ref{cooperativeAlgo}, $\{$user $j$ pulls arm $m\}\subseteq\{\widetilde{\text{ucb}}_{j,m}(t) \ge\widetilde{\text{ucb}}_{j,m^*_j}(t)\}$. Note that $\widetilde{\text{lcb}}_{j,m}(t)\le \mu_{j,m}\le \widetilde{\text{ucb}}_{j,m}(t)$ and $\widetilde{\text{lcb}}_{j,m^*_j}(t)\le \mu_{j,m^*_j}\le \widetilde{\text{ucb}}_{j,m^*_j}(t)$ holds according to the assumption that under the assumption that all true means are within CIs. Therefore, $\{$user $j$ pulls arm $m\}\subseteq\{\widetilde{\text{lcb}}_{j,m}(t)\le\mu_{j,m}, \mu_{j,m} \le \mu_{j,m^*}, \mu_{j,m^*}\le \widetilde{\text{ucb}}_{j,m^*_j}(t), \widetilde{\text{ucb}}_{j,m^*_j}(t)\le \widetilde{\text{ucb}}_{j,m}(t)\}$ $=$  $\{\widetilde{\text{lcb}}_{j,m}(t)\le \mu_{j,m}\le\mu_{j,m^*_j}\le \widetilde{\text{ucb}}_{j,m}(t)\} = \{\mu_{j,m}, \mu_{j,m^*_j}\in  CI^{se}_{j,m}(t)\cap {CI}^{cf}_{j,m}(t)\}$. Note that $\{\mu_{j,m}, \mu_{j,m^*_j}\in  CI^{se}_{j,m}(t)\cap {CI}^{cf}_{j,m}(t)\}\subseteq\{\min (2w^{se}_{j,m}(t),2 w^{cf}_{j,m}(t)) \ge  \Delta_{j,m}\}$. Therefore, under the assumption that all true means are within CIs, user $j$ pulls arm $m$ only if $\min (2w^{se}_{j,m}(t),2 w^{cf}_{j,m}(t)) \ge  \Delta_{j,m}$ holds. Combining this with Lemma \ref{leamma:origin_CI} and \ref{lemma:CF_CI} yields the result.
\end{proof}

\begin{proof}[Proof of Lemma \ref{lemma:WrongArmCond}.]
Fix user $j$ and arm $m$. Note that for any arm $i\in A$, $N_{i,m}(t) = N_i(t)-\sum_{n \in M\setminus m} N_{i,n}(t)$. Let $t^n$ be the last time prior to $t$ at which a non-optimal arm $n$ is played by user $i$. Then $N_{i,n}(t) = N_{i,n}(t^n) \le \frac{16\ln N_i(t^n)}{{\Delta_{i,n}}^2} \leq \frac{16\ln N_i(t)}{{\Delta_{i,n}}^2}$ holds 
by Lemma \ref{Pre_WrongArmCond}. Therefore, for user $i \in A_m$, for arm $m$, $N_{i,m}(t) \ge N_i(t)- (\sum_{n\neq m  }\frac{16}{{\Delta_{i,n}}^2})\ln N_i(t)$.  
By the Assumption (\ref{ass: Arrivalcondition}), $|E_{j,m}\cap A_+\cap A_m|\ge d$ holds, and therefore $N_{j,m}^{\min}(t) \ge N_{i,m} (t)$ holds for some $i \in A_{j,m}$. Therefore,
$N_{j,m}^{\min}(t) \ge N_{i,m} (t)\ge N_i(t)- (\sum_{n\neq m}\frac{16}{{\Delta_{i,n}}  ^2}  )\ln N_i(t)$ for some $i \in E_{j,m}\cap A_+\cap A_m$. That is,  $N_{j,m}^{\min}(t)\ge \min_{E_{j,m}\cap A_+\cap A_m}\{ N_i(t)- (\sum_{n\neq m}\frac{16      }{{\Delta_{i,n}}  ^2}  )\ln N_i(t) \}$. Substituting this into $N^{(\min)}_{j,m}(t) \le  \frac{8 c^2_{m,t} (\ln d +2 N_j(t))}{{\Delta_{j,m}}^2}$ from Lemma \ref{Pre_WrongArmCond}, it can be seen that arm $m$ is pulled by user $j$ only when $\min_{E_{j,m}\cap A_+\cap A_m}\{ N_i(t)- (\sum_{n\neq m}\frac{16   }{{\Delta_{i,n} }^2}  )\ln N_i(t) \} \le  \frac{8 c^2_{m,t} (\ln d +2 N_j(t))}{{\Delta_{j,m}}^2}$.
\end{proof}

\begin{proof}[\textbf{Proof of Proposition \ref{prop:BoundedRegret}}.] 
Let $G_{j,m}:=\{$User $j$ arrives at time $t$ and pulls a non-optimal arm $m\}$ and $V_t :=\{\mu_{i,n}\in  CI^{se}_{i,n}(t)\cap CI^{cf}_{i,n}(t) \ \forall i\in A_m, n\in M\}$ as $V(t)$. Let $P(G_{j,m}(t)|V(t)) = g_{j,m}(t)$. Then,

\begin{eqnarray}
&&E[Regret_j(T)]= \sum_{m\in M\setminus  m^*_j} \Delta_{j,m} E[\text{\# of user $j$'s non-optimal arm $m$ pulls before $T$}] \notag
\\
&&\ \ \ \ \ \ \ = \sum_{m\in M\setminus  m^*_j} \Delta_{j,m} \sum_{n=1}^{\infty} E[1_{G_{j,m}(S_{n,j})}1_{S_{n,j}\le T}]\notag
\\
&&\ \ \ \ \ \ \ = \sum_{m\in M\setminus  m^*_j} \Delta_{j,m} \sum_{n=1}^{\infty} E[E[1_{G_{j,m}(S_{n,j})}1_{S_{n,j}\le T}|S_{n,j}]]     \notag
\\
&&\ \ \ \ \ \ \ =  \sum_{m\in M\setminus  m^*_j} \Delta_{j,m} (\sum_{n=1}^{\infty} E[E[1_{G_{j,m}( S_{n,j}) }1_{S_{n,j}\le T}|V(S_{n,j})^c,S_{n,j}  ]P(V(S_{n,j})^c|S_{n,j})+    \notag
\\
&&\ \ \ \ \ \ \ \ \ \ \ \ E[1_{G_{j,m}(S_{n,j})}1_{S_{n,j}\le T}|V(S_{n,j}), S_{n,j}] P(V(S_{n,j})|S_{n,j})])    \notag
\\
&&\ \ \ \ \ \ \ \le \sum_{m\in M\setminus  m^*_j} \Delta_{j,m} \left(\sum_{n=1}^{\infty} E[P(V(S_{n,j})^c|S_{n,j})]+\sum_{n=1}^{\infty} E[E[1_{G_{j,m}(S_{n,j})}1_{S_{n,j}\le T}|V(S_{n,j}), S_{n,j}]]\right)     \notag
\\
&&\ \ \ \ \ \ \ \le \sum_{m\in M\setminus  m^*_j} \Delta_{j,m} \left(\frac{\pi^2|A||M|}{6}+ \sum_{n=1}^{\infty} E[E[1_{G_{j,m}(S_{n,j})}|V(S_{n,j}), S_{n,j}] \right)      \notag
\\
&&\ \ \ \ \ \ \ \sum_{m\in M\setminus  m^*_j} \Delta_{j,m} \left(\frac{\pi^2|A||M|}{6}+ \sum_{n=1}^{\infty} E[P({G_{j,m}(S_{n,j})} |V(S_{n,j}),S_{n,j}] \right)      \notag
\\
&&\ \ \ \ \ \ \ \ =\sum_{m\in M\setminus  m^*_j} \Delta_{j,m} \left(\frac{\pi^2|A||M|}{6}+ \sum_{n=1}^{\infty} \int_0^{+\infty} g_{j,m}(t) dF^{(n)}_j(t) \right)      \notag
\\
&&\ \ \ \ \ \ \ \ \stackrel{(a)}{\le} \sum_{m\in M\setminus  m^*_j} \Delta_{j,m} \left(\frac{\pi^2|A||M|}{6}+ \sum_{n=1}^{\infty} \int_0^{+\infty} \sum_{i\in E_{j,m}\cap A_+ \cap A_m } P(N_i(t)\le q_{ij,m}(N_i(t)))\;\; dF^{(n)}_j(t) \right) \label{eq:bddproof(a)}
\\
&&\ \ \ \ \ \ \ \ \ = \sum_{m\in M\setminus  m^*_j} \Delta_{j,m} \left(\frac{\pi^2|A||M|}{6}+ \sum_{i\in E_{j,m}\cap A_+ \cap A_m }\sum_{n=1}^{\infty}  \int_0^{+\infty} P(N_i(t)\le q_{ij,m}(N_i(t)))\;\; dF^{(n)}_j(t) \right)      \notag
\\
&&\ \ \ \ \ \ \ \ \ \le \sum_{m\in M\setminus  m^*_j} \Delta_{j,m} \left(\frac{\pi^2|A||M|}{6}+ \sum_{i\in E_{j,m}\cap A_+ \cap A_m }\sum_{n=1}^{\infty}  k \frac{1}{n^2}\right)  \;\; (\because \text{Assumption \ref{ass: Arrivalcondition}})    \notag
\\
&&\ \ \ \ \ \ \ \ \ \le \sum_{m\in M\setminus  m^*_j} \Delta_{j,m} \left(\frac{\pi^2|A||M|}{6}+ \frac{k\pi^2|A|}{6}\right)=\sum_{m\in M} \Delta_{j,m} \left(\frac{\pi^2|A||M|}{6}+ \frac{k\pi^2|A|}{6}\right).       \notag
\end{eqnarray}

Above, inequality (a) of equation (\ref{eq:bddproof(a)}) is from
\begin{eqnarray}
   &&g_{j,m}(t)\le P(\{\min_{E_{j,m}\cap A_+ \cap A_m}\{ N_{i}(t)- (\sum_{n\neq m}\frac{16  }{{\Delta_{i,n}}^2}  )\ln N_i(t) \} \le   \frac{8 c^2_{m,t}  (\ln d + 2\ln N_{j}(t))}{{\Delta_{j,m}}^2}\})\;\; (\because \text{Lemma \ref{lemma:WrongArmCond}})\notag
   \\
   &&\ \ \ \ \ \ \ \ \le \sum_{i\in E_{j,m}\cap A_+ \cap A_m } P(\{N_{i}(t)- (\sum_{n\neq m}\frac{16  }{{\Delta_{i,n}}^2}  )\ln N_i(t)  \le \frac{8 c^2_{m,t}  (\ln d + 2\ln N_{j}(t))}{{\Delta_{j,m}}^2}\})\notag
   \\ 
   &&\ \ \ \ \ \ \ \ \le \sum_{i\in E_{j,m}\cap A_+ \cap A_m } P(N_i(t)\le q_{ij,m}(N_i(t)))\;\; (\because \text{Lemma \ref{Lambert}})\notag
\end{eqnarray}

\end{proof}

\section{Function \text{$q_{ij,m}$} in Section \ref{sec:arrivalModel}: details}\label{sec:qij}

\begin{lemma} \label{Lambert}
For $A,B,C>0$, $Ay-B\ln y < C\ln(\frac{x}{d})$ is satisfied only if $y < -\frac{B}{A}  \mathcal{W}_{-1}\left( -\frac{A}{B} (\frac{x}{d})^{-\frac{C}{B}} \right)$, where $\mathcal{W}_{-1}$ denotes the lower branch of the Lambert $W$-function \cite{corless1996lambertw}.
\end{lemma}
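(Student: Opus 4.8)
## Proof Proposal for Lemma \ref{Lambert}

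The plan is to solve the inequality $Ay - B\ln y < C\ln(x/d)$ for $y$ by recognizing it as a transcendental inequality whose boundary equation has a closed-form solution in terms of the Lambert $W$-function. First I would rearrange the inequality into a form where the two occurrences of $y$ (the linear term $Ay$ and the logarithmic term $-B\ln y$) can be consolidated. Writing $f(y) := Ay - B\ln y$, note that $f$ is strictly convex on $(0,\infty)$ with $f'(y) = A - B/y$, so $f$ is decreasing on $(0, B/A)$ and increasing on $(B/A, \infty)$, attaining its minimum at $y = B/A$. The inequality $f(y) < C\ln(x/d)$ therefore holds on an interval $(y_-, y_+)$ (possibly empty, possibly unbounded on the left if we restrict attention appropriately); for the purposes of a \emph{necessary} condition I only need the upper endpoint $y_+$, so the claim to establish is $y < y_+$ where $y_+$ is the larger root of $f(y) = C\ln(x/d)$.

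Next I would carry out the algebra to identify $y_+$ explicitly. Starting from $Ay - B\ln y = C\ln(x/d)$, divide by $B$ and exponentiate: $e^{(A/B)y} / y = e^{(C/B)\ln(x/d)} = (x/d)^{C/B}$, hence $y\, e^{-(A/B)y} = (x/d)^{-C/B}$. Substituting $u = -(A/B)y$ gives $u e^{u} = -(A/B)(x/d)^{-C/B}$, which is exactly the defining equation of the Lambert $W$-function: $u = \mathcal{W}\bigl(-(A/B)(x/d)^{-C/B}\bigr)$. Since the argument is negative and we want the \emph{larger} value of $y$ (equivalently the more negative value of $u$, since $y = -(B/A)u$), we select the lower branch $\mathcal{W}_{-1}$, yielding $y_+ = -(B/A)\,\mathcal{W}_{-1}\bigl(-(A/B)(x/d)^{-C/B}\bigr)$. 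Then I would conclude: because $f$ is increasing on $(B/A, \infty)$ and $y_+ > B/A$ (one checks the branch point condition), any $y$ satisfying $f(y) < C\ln(x/d)$ with $y$ beyond the minimizer must satisfy $y < y_+$; and any $y \le B/A$ trivially satisfies $y < y_+$. Hence $f(y) < C\ln(x/d) \implies y < y_+$, which is the stated necessary condition.

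The main obstacle is the branch-selection and domain bookkeeping for the Lambert $W$-function: I must verify that the argument $-(A/B)(x/d)^{-C/B}$ lies in the domain $[-1/e, 0)$ of the lower branch $\mathcal{W}_{-1}$ (which requires $x/d$ large enough, consistent with the paper's remark that $q_{ij,m}$ is used in an asymptotic regime), and that the root picked by $\mathcal{W}_{-1}$ is indeed the larger root $y_+$ rather than the smaller one $y_-$ (which corresponds to the principal branch $\mathcal{W}_0$). The convexity argument above handles this cleanly: $\mathcal{W}_{-1}$ returns values $\le -1$, so $u = \mathcal{W}_{-1}(\cdot) \le -1$ gives $y = -(B/A)u \ge B/A$, placing $y_+$ on the increasing part of $f$ as required. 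Everything else is routine exponentiation and substitution.
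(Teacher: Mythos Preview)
Your proposal is correct and follows essentially the same route as the paper: both rearrange $Ay - B\ln y < C\ln(x/d)$ into $-\tfrac{A}{B}y\,e^{-\tfrac{A}{B}y} < -\tfrac{A}{B}(x/d)^{-C/B}$ and then invert via the Lambert $W$-function, selecting the $\mathcal{W}_{-1}$ branch for the larger root. The paper presents this as a bare chain of equivalences (obtaining the two-sided bound $-\tfrac{B}{A}\mathcal{W}_0(\cdot) < y < -\tfrac{B}{A}\mathcal{W}_{-1}(\cdot)$ and then discarding the lower bound), whereas you wrap the same algebra in a convexity argument for $f(y)=Ay-B\ln y$ to justify the branch choice; this is a presentational rather than a substantive difference.
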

\begin{proof}[Proof of Lemma \ref{Lambert}]
For $A,B,C>0$, $\frac{A}{C}y-\frac{B}{C}\ln y < \ln(\frac{x}{d}) \iff y^{-\frac{B}{C}} e^{\frac{A}{C}y} < (\frac{x}{d}) \iff ye^{-\frac{A}{B}y} > (\frac{x}{d})^{-\frac{C}{B}} \iff -\frac{A}{B}y e^{-\frac{A}{B}y}< -\frac{A}{B} (\frac{x}{d})^{-\frac{C}{B}} \iff -\frac{B}{A}  \mathcal{W}_{0}\left( -\frac{A}{B} (\frac{x}{d})^{-\frac{C}{B}} \right) < y<-\frac{B}{A}  \mathcal{W}_{-1}\left( -\frac{A}{B} (\frac{x}{d})^{-\frac{C}{B}} \right)$ where $\mathcal{W}_{0}$ denotes the principal branch of the Lambert $W$-function. Therefore, $Ay-B\ln y < C\ln(\frac{x}{d})$ holds only if $y < -\frac{B}{A}  \mathcal{W}_{-1}\left( -\frac{A}{B} (\frac{x}{d})^{-\frac{C}{B}} \right)$. 
\end{proof}
In the present case, $y=N_i(t)$, $x=N_j(t)$, $A=1$, $B=\sum_{n\neq m}\frac{16 }{{\Delta_{i,n}}  ^2}  $ and $C=\frac{16 c^2_{m,t}}{{\Delta_{j,m}}  ^2}$. Define $q_{ij,m}$ as $q_{ij,m}(x)=-\frac{B}{A}  \mathcal{W}_{-1}\left( -\frac{A}{B} (\frac{x}{d})^{-\frac{C}{B}} \right)$ where we use the above parameter values. One can easily check that $\frac{B}{A}  \mathcal{W}_{-1}\left( -\frac{A}{B} x^{-\frac{C}{B}}\right)$ is a function growing faster than $\ln x$ and slower than $x$.

\section{Incentive considerations}\label{sec:Detailincentive}

\subsection{Sequential game description}

The CFUCB Algorithm \ref{cooperativeAlgo} can be posited as a game $G = (A, M, \{\{S_i(n)\}_{n\in \mathbb{N}}\}_{i\in A}, \{\mathbf{x}_i\}_{i\in A}, \{\beta_m\}_{m\in M}, \Gamma)$. It  is defined as an $|A|$-player infinite horizon sequential game, where
\begin{itemize}[leftmargin=0.2in]
    \item [-] $A$ denotes the index set of users and $M$ denote the index set of arms.
    \item [-] $\{\{S_{i}(n)\}_{n\in \mathbb{N}}\}_{i\in A}$ denotes the arrival time processes of users in $A$
    \item [-] $\Gamma$ denotes the counterfactual UCB sharing mechanism (we describe below).
\end{itemize}
$G$ is a sequential game \cite{fudenberg1991game} where an arrival of any user in $A$ is one stage of the game. At the beginning of the game, which we call epoch $0$, each user $i$ is asked to report its feature vector $\mathbf{x}_i$. (Of course, it can refuse to report it by opting out at time $0$). At each epoch $k$, 

\begin{enumerate}[wide]
\item [1)] A user we denote by $a_k\in A$ arrives. The recommender observes $a_k$. 
\item [2)] If and only if $a_k \in A_+$, the recommender calculates the counterfactual UCBs $\{{ucb}^{cf}_{a_{k},m}(s_k)\}_{m\in M}$ according to Equation (\ref{ucbeq}) and lets user $a_k$ know the counterfactual UCBs. 

\item[\textbf{\textit{Remark.}}] \textit{$\Gamma$ of game $G$, the counterfactual-UCB sharing mechanism, is formally defined as a function that maps the previous history of reports the recommender has at $k$, $\{a_l, (\widehat{m}_l, \widehat{r}_l)\}_{l=1}^{k-1}$, into $\{{ucb}^{cf}_{a_{k}, m}(s_k)\}_{m\in M}$.}
\item [3)] After receiving $\{{ucb}^{cf}_{a_{k}, m}(s_k)\}_{m\in M}$ from the recommender, the user calculates $\widetilde{ucb}_{a_{k}, m}(s_k)$ for all $m\in M$ according to $\widetilde{ucb}_{a_{k}, m}(s_k) = \min( ucb^{se}_{a_{k}, m}(s_k), {ucb}^{cf}_{a_{k}, m}(s_k)$. (Note that user $a_{k}$ can calculate $\{ucb^{se}_{a_{k}, m}(s_k)\}_{m\in M}$ by only using it's own pulling history, which is private information.)
\item [4)] User $a_k$ then pulls arm $m_k := \arg\min_{m\in M} \{\widetilde{ucb}_{a_{k}, m}(s_k)\}$ and observes a reward that we denote by $r_k$. 
\item [5)] According to its reporting strategy, user generates its report $(\widehat{m}_k, \widehat{r}_k)$ from the truth $({m}_k, {r}_k)$ and sends it to the recommender. 
\item [6)] The recommender receives $(\widehat{m}_k, \widehat{r}_k)$ and stores it.

\end{enumerate}

\subsection{Incentive analysis}

We denote by $\sigma_i$ the strategy of user $i$ of never opting out and complying to recommendation at any of its arrivals. We 
define $\sigma=\times_{i\in A} \sigma_i$ as the \textit{strategy profile} corresponding to each user $i\in A$ following $\sigma_i$.

The 
strategy 
{profile} 
where every $i\in A$ chooses $\sigma_i$ is defined as $\sigma$. When no user ever violates the two assumptions, the outcome of $\sigma$ and Algorithm $\ref{cooperativeAlgo}$ are the same. Corollary \ref{Gameresult}, which is an immediate result of Proposition \ref{prop:BoundedRegret}, formally states this observation.

\begin{corollary}\label{Gameresult} 
Under Assumption \ref{assumption: SCO} and \ref{ass: Arrivalcondition}, under the 
strategy profile 
$\sigma$, every user's expected regret is bounded.
\end{corollary}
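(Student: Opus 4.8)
The plan is to prove Corollary~\ref{Gameresult} as a direct consequence of Proposition~\ref{prop:BoundedRegret}, by first checking that the strategy profile $\sigma$ induces exactly the execution trace of Algorithm~\ref{cooperativeAlgo}. First I would observe that if every user $i\in A$ follows $\sigma_i$, then (i) every user opts in at epoch $0$, so $A_+ = A$; and (ii) at each epoch $k$, user $a_k$ complies with the recommendation, so $m_k = \arg\min_{m\in M}\{\widetilde{ucb}_{a_k,m}(s_k)\}$ and the report $(\widehat m_k,\widehat r_k)$ equals the truth $(m_k,r_k)$. Hence the recommender's stored history is the true history, the counterfactual UCBs it broadcasts coincide with those computed inside Algorithm~\ref{cooperativeAlgo}, and the realized sequence of arm pulls under $\sigma$ is distributionally identical to the one analyzed in Proposition~\ref{prop:BoundedRegret}.

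Next I would invoke Proposition~\ref{prop:BoundedRegret}: since $A_+ = A$ under $\sigma$, Assumption~\ref{ass: Arrivalcondition} gives $|E_{j,m}\cap A_+ \cap A_m| = |E_{j,m}\cap A_m|\ge d$ for every $j\in A$ and $m\in M$, so the hypotheses of Proposition~\ref{prop:BoundedRegret} are met for \emph{every} user $j\in A$ (not merely $j\in A_+$, which is now all of $A$). Therefore, for each $j\in A$,
\[
\E[Regret_j(T)] \le \sum_{m\in M}\Delta_{j,m}\left(\frac{\pi^2|A||M|}{6}+\frac{k\pi^2|A|}{6}\right),
\]
a bound independent of $T$. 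This is the claimed conclusion: every user's expected regret is bounded.

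The only genuine step here — and the one I expect to require the most care — is the reduction argument, i.e., verifying rigorously that the outcome (the joint law of arm pulls and rewards) under the strategy profile $\sigma$ coincides with the outcome of Algorithm~\ref{cooperativeAlgo}. This requires checking that $\Gamma$, fed the truthful reports generated along $\sigma$, reproduces line~6 of the algorithm at each epoch, and that the self-experienced UCBs each user computes from its private pull history match line~5; an easy induction on $k$ handles this, since truthful reporting plus compliance propagates the equality of histories forward one epoch at a time. Everything after that is immediate from Proposition~\ref{prop:BoundedRegret}. I would also note in passing that the result is stated for expected regret, so no additional concentration argument beyond what Proposition~\ref{prop:BoundedRegret} already supplies is needed.
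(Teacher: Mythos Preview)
Your proposal is correct and follows essentially the same approach as the paper: the paper notes just before Corollary~\ref{Gameresult} that the outcome of $\sigma$ and Algorithm~\ref{cooperativeAlgo} coincide and then declares the corollary an immediate consequence of Proposition~\ref{prop:BoundedRegret}. You have simply spelled out the reduction (the $A_+=A$ observation and the inductive matching of histories) with more care than the paper does, but the logical route is identical.
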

Now we formally define the notion of
``asymptotically indifferent users". Given the game $G$ and some 
strategy profile 
$\sigma$, after playing the game up to time $T$, denote the regret of user $i$ up to time $T$ by $Regret^{(i)}_{\sigma}(T)$. Suppose that for each $i\in A$, we are able to achieve $E[Regret^{(i)}_{\sigma}(T)] = O(f_\sigma^{(i)}(T)]$
for some function $f_{\sigma}^{(i)}$.
Denoting the set of all possible strategy profiles 
$\Sigma$, we say that an user $i$ has an \textit{asymptotically indifferent preference} if its preference can be described by a complete and transitive preference relation $\succeq_i$ on $\Sigma$ such that $\sigma \succeq_i {\sigma'}$ if and only if $f^{(i)}_{\sigma}(T) = O(f^{(i)}_{\sigma'} (T))$. We say that $\sigma$ is strictly preferred to $\sigma'$ by user $i$ if $\sigma \succeq_i {\sigma'}$ but not ${\sigma'} \succeq_i \sigma$.

\begin{corollary}\label{SPNE}
Suppose that all the users in $A$ have asymptotically indifferent preferences. Then $\sigma$ constitutes a Subgame Perfect Nash Equilibrium for the game $G = (A, M, \{\{S_i(n)\}_{n\in \mathbb{N}}\}_{i\in A}, \{\mathbf{x}_i\}_{i\in A}, \{\beta_m\}_{m\in M}, \Gamma)$.
\end{corollary}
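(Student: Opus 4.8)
The plan is to show that under asymptotically indifferent preferences, no user (and indeed no coalition allowed by Assumption \ref{ass: Arrivalcondition}) can obtain a strictly preferred outcome by deviating from $\sigma$ at any subgame, which is exactly the definition of a Subgame Perfect Nash Equilibrium. The core observation is that the preference relation $\succeq_i$ only distinguishes strategy profiles up to the asymptotic order of the expected regret function $f^{(i)}_\sigma$, and by Corollary \ref{Gameresult} (equivalently Proposition \ref{prop:BoundedRegret}), following $\sigma$ already yields $E[Regret^{(i)}_\sigma(T)] = O(1)$ for every opted-in user. Since $O(1)$ is the smallest asymptotic order of a nonnegative, nondecreasing regret function, there is no $\sigma'$ with $f^{(i)}_{\sigma'}(T) = o(f^{(i)}_\sigma(T))$, so $\sigma$ is weakly preferred to every alternative: formally, $\sigma \succeq_i \sigma'$ holds for all $\sigma' \in \Sigma$ because $f^{(i)}_{\sigma'}(T) = O(1) = O(f^{(i)}_\sigma(T))$ is vacuously compatible, and $\sigma' \succeq_i \sigma$ likewise holds, so $\sigma$ is never \emph{strictly} dominated.

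First I would fix an arbitrary user $i$, an arbitrary history/subgame, and an arbitrary unilateral deviation $\sigma'_i$, and argue that the resulting profile $\sigma' = (\sigma'_i, \sigma_{-i})$ still satisfies Assumption \ref{ass: Arrivalcondition} from the perspective of every \emph{other} user $j \neq i$ — because the condition $|E_{j,m} \cap A_+ \cap A_m| \ge d$ is a property of arrival-rate orders and optimal arms, and a single user's opting out or misreporting removes at most one element from each such set; here we should note that the statement as written asks about $\sigma$ being an equilibrium, so the relevant check is that $i$'s own payoff cannot strictly improve. For user $i$'s own regret under $\sigma'$: if $i$ opts out, $i$ no longer benefits from counterfactual confidence intervals and is reduced to a standard self-experience UCB learner, giving $\Theta(\ln T)$ regret (as in the opted-out analysis referenced in Section \ref{sec:simulation}); if $i$ opts in but deviates in recommendation-following, $i$'s regret can only be weakly worse than the $O(1)$ it gets under $\sigma$ whenever the means stay in the confidence intervals, and in any case is nonnegative. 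Either way $f^{(i)}_{\sigma'}(T)$ has asymptotic order at least $\Omega(1)$, so it cannot be of strictly smaller order than $f^{(i)}_\sigma(T) = \Theta(1)$; hence $i$ does not strictly prefer $\sigma'$ to $\sigma$.

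The subgame-perfection part then follows because the argument above did not use anything about the particular history reached — the bounded-regret guarantee of Proposition \ref{prop:BoundedRegret} continues to apply from any point onward (the regret-from-now-on is still $O(1)$ since the bound there is uniform in $T$ and in the starting configuration, depending only on $|A|$, $|M|$, $k$, and the gaps), and $O(1)$ remains the minimal asymptotic order. So in every subgame, continuing with $\sigma$ is weakly optimal for each user against the others playing $\sigma_{-i}$, which is precisely the one-shot-deviation characterization of SPNE for this (well-behaved, continuous-at-infinity) sequential game. I would invoke the one-shot deviation principle \cite{fudenberg1991game} to conclude, after remarking that only single-stage deviations need be checked.

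The main obstacle I anticipate is making the "$O(1)$ is minimal, hence no strict improvement" step fully rigorous under the somewhat unusual preference primitive: the relation $\succeq_i$ defined via $f^{(i)}_\sigma(T) = O(f^{(i)}_{\sigma'}(T))$ must be shown to be genuinely complete and transitive on the equivalence classes of regret orders, and one must confirm that "bounded" is a legitimate, minimal class in this order (no regret function can be $o(1)$ since regret is a nonnegative integer-increment process, so $E[Regret]$ is either identically zero — which still counts as $\Theta(1)$ under the convention, or trivially handled — or bounded below away from zero asymptotically in the nontrivial case). A secondary subtlety is the dependence of $f^{(i)}_\sigma$ on the deviations of \emph{other} users in a subgame: since we only need $\sigma$ to be a best response to $\sigma_{-i}$, off-path behavior of others is not at issue for verifying the equilibrium, but one should state clearly that this is why the hypothesis "all users have asymptotically indifferent preferences" suffices without any coalition-stability claim here (the stronger coalitional robustness is the content of the Remark, proved separately).
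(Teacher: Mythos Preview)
Your core approach matches the paper's exactly: since $\sigma$ achieves $O(1)$ expected regret for every user (Corollary \ref{Gameresult}) and $O(1)$ is the minimal asymptotic order for a nonnegative regret process, no deviation can be strictly preferred under the asymptotically indifferent preference. The paper's entire proof is just that observation---three lines---and does not analyze what happens under specific deviations, invoke the one-shot deviation principle, or check whether Assumption \ref{ass: Arrivalcondition} survives a deviation; all of your second and third paragraphs is defensible but unnecessary once you have ``$O(1)$ is minimal.''

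One slip to fix: in your first paragraph you assert that ``$\sigma' \succeq_i \sigma$ likewise holds.'' By the definition, $\sigma' \succeq_i \sigma \iff f^{(i)}_{\sigma'}(T) = O(f^{(i)}_{\sigma}(T)) = O(1)$, and this is false in general---if $i$ opts out under $\sigma'$, you yourself note $f^{(i)}_{\sigma'}(T) = \Theta(\ln T)$. What you actually need (and state correctly in the second paragraph) is only that $\sigma \succeq_i \sigma'$ always holds; that alone already blocks any strict preference for $\sigma'$ over $\sigma$, so the erroneous clause should simply be deleted.
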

\begin{proof}[Proof of Corollary \ref{SPNE}] This result is immediate from Corollary \ref{Gameresult}, in that (i) no other strategy profile can be strictly preferred to $\sigma$ by any user with aymptotically indifferent preference; (ii) $\sigma$ already achieves bounded regret, i.e., $O(1)$, for all the users, and (iii) thus cannot be improved in terms of asymptotically indifferent preference.
\end{proof}

\end{document}